\def\eqref#1{equation~\ref{#1}}
\def\1{\bm{1}}
\DeclareMathAlphabet{\mathsfit}{\encodingdefault}{\sfdefault}{m}{sl}
\SetMathAlphabet{\mathsfit}{bold}{\encodingdefault}{\sfdefault}{bx}{n}
\DeclareMathOperator*{\argmax}{arg\,max}
\DeclareMathOperator*{\argmin}{arg\,min}
\title{Diffusion Alignment as \\Variational Expectation-Maximization}
\author{\textbf{Jaewoo Lee}$^{1,2}$\thanks{Correspondence to: jaewoo@kaist.ac.kr}\quad
        \textbf{Minsu Kim}$^{1,3}$\quad
        \textbf{Sanghyeok Choi}$^{4}$\quad
        \textbf{Inhyuck Song}$^{1}$\quad
        \textbf{Sujin Yun}$^{5}$\\
        \textbf{Hyeongyu Kang}$^{1}$\quad
        \textbf{Woocheol Shin}$^{1}$\quad
        \textbf{Taeyoung Yun}$^{1}$\quad
        \textbf{Kiyoung Om}$^{1}$\quad
        \textbf{Jinkyoo Park}$^{1,6}$ \\\\
        $^{1}$KAIST \quad 
        $^{2}$MongooseAI \quad 
        $^{3}$Mila - Quebec AI Institute \quad
        $^{4}$University of Edinburgh \\
        $^{5}$Mila, Universit\'e de Montr\'eal \quad
        $^{6}$Omelet
}
\begin{document}

\maketitle
\vspace{-10pt}
\begin{abstract}
Diffusion alignment aims to optimize diffusion models for the downstream objective. While existing methods based on reinforcement learning or direct backpropagation achieve considerable success in maximizing rewards, they often suffer from reward over-optimization and mode collapse. We introduce \textbf{Diffusion Alignment as Variational Expectation-Maximization (DAV)}, a framework that formulates diffusion alignment as an iterative process alternating between two complementary phases: the E-step and the M-step. In the E-step, we employ test-time search to generate diverse and reward-aligned samples. In the M-step, we refine the diffusion model using samples discovered by the E-step. We demonstrate that DAV can optimize reward while preserving diversity for both continuous and discrete tasks: text-to-image synthesis and DNA sequence design. Our code is available at \url{https://github.com/Jaewoopudding/dav}.

\end{abstract}
\section{Introduction}

Diffusion models~\citep{ho2020denoising, song2021scorebased} excel at generating high-fidelity samples across diverse domains, from image synthesis~\citep{rombach2022high, ho2022video}, robotics~\citep{chi2023diffusion} to computational biology~\citep{sahoo2024simple}. Beyond generating high-likelihood samples, many real-world applications necessitate samples optimized for external criteria; the aesthetic quality of images~\citep{schuhmann2022laion}, or the biological activity of DNA enhancers~\citep{gosai2023machine}.

To align the diffusion model with downstream objectives, various fine-tuning methods for diffusion models have been proposed. Prior works can be typically divided into two categories: (1) RL-based fine-tuning and (2) direct backpropagation. RL-based approaches~\citep{fan2023dpok, black2023training, venkatraman2024amortizing} optimize the parameters of diffusion models with a reverse-KL objective using on-policy data. This combination is prone to mode-seeking behavior, which may cause premature convergence and mode collapse, severely degrading sample quality and diversity~\citep{kim2025test}. Direct backpropagation~\citep{clark2023directly, prabhudesai2023aligning} attains higher sample efficiency but depends on sharp, brittle gradient signals from learned reward functions~\citep{trabucco2021conservative}, often leading to severe over-optimization~\citep{skalse2022defining}. Thus, there is a pressing need for a fine-tuning framework that can effectively maximize rewards without sacrificing the diversity and naturalness of the pretrained diffusion model.

To this end, we propose \textbf{Diffusion Alignment as Variational Expectation-Maximization (DAV)}. Inspired by~\citep{levine2018reinforcement}, our framework is based on the variational Expectation-Maximization (EM) algorithm~\citep{neal1998view, jordan1999introduction},  iteratively alternating between the E-step for exploration and the M-step for amortization. 
The \textbf{E-step (Exploration)} aims to discover diverse and high-reward samples from the variational posterior.
To effectively capture the multi-modal structure of posterior distribution, we invest additional test-time computation~\citep{kim2025test, zhang2025inference} via techniques such as gradient-based guidance~\citep{grathwohl2021oops, guo2024gradient} and importance sampling, enabling a thorough exploration of promising regions.

The subsequent \textbf{M-step (Amortization)} updates  $p_{\theta}$ to $p_{\theta^{\prime}}$ by distilling the knowledge from the discovered samples back into the parameters of the diffusion model. Unlike conventional RL-based methods that optimize a reverse-KL divergence, a \emph{mode-seeking} objective that concentrates on a single dominant mode~\citep{fan2023dpok, venkatraman2024amortizing, uehara2024fine}, our M-step update corresponds to minimization of the forward-KL divergence, a \emph{mode-covering} objective that encourages the model to cover all diverse modes discovered through the E-step~\citep{chan2022greedification}. Therefore, the iteration of EM steps leads to a synergetic cycle, where the M-step adaptively refines the model towards a multi-modal aligned distribution, and the improved model in turn enables the E-step to gather samples from a more aligned distribution while preserving diversity.

To demonstrate the versatility of our method, we apply DAV to align diffusion models in both continuous (text-to-image synthesis) and discrete (DNA sequence design) domains. For the image synthesis task, we fine-tune the Stable Diffusion v1.5 model~\citep{rombach2022high} to optimize for given external rewards---including aesthetic quality~\citep{schuhmann2022laion} and non-differentiable objectives like image compressibility and incompressibility~\citep{black2023training}---while mitigating the mode collapse of the images. For DNA sequence design, we fine-tune a discrete masked diffusion model~\citep{sahoo2024simple} to design DNA enhancers~\citep{gosai2023machine} that achieve high target activity while preserving the naturalness and diversity of generated sequences.

\section{Related Works}
\subsection{Diffusion Alignment}
\paragraph{Fine-tuning approaches. }RL-based fine-tuning frames the denoising process as a sequential decision-making problem, optimizing a policy to maximize a black-box reward function \citep{black2023training, fan2023dpok, uehara2024fine, su2025iterative}. In parallel, direct backpropagation approaches propagate the gradient signal from the differentiable reward function through the diffusion denoising chain, greatly improving sample efficiency \citep{xu2023imagereward, clark2023directly, prabhudesai2023aligning}. Despite progress, both fine-tuning methods suffer from reward over-optimization \citep{skalse2022defining}, due to the mode-seeking behavior of reverse-KL optimization in reinforcement learning \citep{korbak2022rl} and the brittle gradient signal from the reward model~\citep{kim2025offline}. Recently, ~\citet{liu2024efficient, domingo2025adjoint} propose to fine-tune the continuous diffusion model using the gradient signal of the reward function to follow the tilted distribution $p_\text{pretrained}(x)\cdot\exp(r(x))$, where $r(x)$ is the reward function. While effective at improving performance and mitigating over-optimization, these methods require a differentiable reward function and are not straightforward to extend to discrete diffusion models.

\paragraph{Test-time inference approaches.} Test-time inference approaches allocate additional computation during generation to find aligned outputs without altering model weights. Techniques include guidance~\citep{dhariwal2021diffusion, chung2023diffusion, yu2023freedom, bansal2023universal}, and test-time search methods~\citep{ma2025inference, zhang2025inference, li2024derivative, singhal2025a, kim2025inference, li2025dynamic, jain2025diffusion}. Despite their effectiveness in bypassing additional post-training phases, these methods face drawbacks: guidance-based approaches often suffer from underoptimization \citep{kim2025inference}, and search-based algorithms demand substantial computational overhead, rendering them impractical for real-world applications.

DAV unifies the strengths of these two paradigms by leveraging a test-time search to collect diverse, aligned samples and then distill the gathered knowledge back into the model through a principled Expectation-Maximization algorithm. By amortizing test-time search into the parameters of the diffusion model, DAV achieves strong alignment and sampling diversity without extensive computation requirements at inference time. While recent methods~\citep{liu2024efficient, domingo2025adjoint} focus on continuous diffusion models and rely on the differentiability of the reward function, DAV naturally extends to both continuous and discrete diffusion without requiring any assumption on the differentiability of the reward function, making it a more general and widely applicable framework.

\section{Backgrounds}
\subsection{Diffusion Models}

Diffusion models are generative models that learn data distributions by adding noise to data and then training how to reverse the noising process~\citep{sohl2015deep}. They comprise two opposing Markov chains, a forward process $q$ that gradually adds noise to the data, and a learned reverse process $p_\theta$ that denoises the state to recover the original data:
\begin{align*}
q(x_{1:T}\mid x_0)=\prod_{t=1}^T q(x_t\mid x_{t-1}),
\qquad
p_\theta(x_{0:T})=p_T(x_T)\prod_{t=1}^T p_{\theta}(x_{t-1}\mid x_t),
\end{align*}
where $x_0\!\sim\!q_{\text{data}}$ is clean data, $x_T$ is pure noise, and $p_T(x_T)$ is a simple base distribution from which we can easily sample. Depending on the data modality, we can parameterize diffusion processes with a Gaussian distribution for continuous diffusion or a Categorical distribution for discrete diffusion. The details are elaborated in \Cref{app: Additional Backgrounds}. 

The reverse process $p_\theta$ is learned by optimizing the evidence lower bound (ELBO) on the data log-likelihood $\mathbb{E}_{x_0 \sim q_{\text{data}}}\left[\log p_\theta(x_0)\right]$, which can be decomposed as follows:
\[ {\mathbb{E}_q[\log p_\theta(x_0|x_1)]} - \sum_{t=2}^T {\mathbb{E}_q[ D_{\text{KL}}(q(x_{t-1}|x_t,x_0) \parallel p_\theta(x_{t-1}|x_t))]} -{D_{\text{KL}}(q(x_T|x_0) \parallel p_T(x_T))}. \]

\subsection{Markov Decision Process}\label{section: mdp}
We formulate the fine-tuning of the reverse diffusion process as a Markov Decision Process (MDP). Following prior works~\citep{fan2023optimizing, black2023training}, we define a finite-horizon MDP with sparse reward and deterministic transitions, denoted as the tuple $(\mathcal{S}, \mathcal{A}, P, r, \gamma, \rho_0)$, where $\mathcal{S}$ is the state space, $\mathcal{A}$ the action space, $P$ the transition dynamics, $\gamma\subset[0,1]$ the discount factor, $r: \mathcal{S} \times \mathcal{A} \rightarrow \mathbb{R}$ the reward function, and $\rho_0$ the initial state distribution. Unlike previous approaches, we incorporate the discount factor $\gamma$. Within this MDP, we want to fine-tune our diffusion policy $\pi$. The specific formulations are as follows:
\begin{align*}
s_t                &\triangleq (x_{T-t}, T-t)                           & \pi_\theta(a_t|s_t)           &\triangleq p_\theta(x_{T-t-1}|x_{T-t})          & P(s_{t+1}|s_t,a_t) &\triangleq \delta_{(x_{T-t-1},T-t-1)} \\
a_t                &\triangleq x_{T-t-1}                            & \rho_0(s_0)                   &\triangleq (p_{T},\delta_T)   & r(s_t,a_t)                   &\triangleq 
\begin{cases}R(x_0) & \text{if } t = T-1 \\
0      & \text{otherwise}.
\end{cases}
\end{align*}
$R(x_0)$ is an external reward function defined on the clean data space. For brevity, we mainly use notations with $x_t$'s (rather than $(s_t,a_t)$) in the subsequence sections. Also, with a slight abuse of notation, we often denote $r(x_t,x_{t-1})$ instead of $r((x_t,t),x_{t-1})$.

\subsection{KL-Divergence Regularized Reinforcement Learning}
Following ~\citet{abdolmaleki2018maximum, wu2019behavior, kumar2019stabilizing}, we consider maximization of the KL-regularized RL objective for a given diffusion model $p_{\text{prior}}=p_{\theta}$, i.e.,
\begin{align}
\label{eq:rl-regularized maxent rl objective}
p_{\theta^*} = \argmax_{p_{\theta^{\prime}}} \mathbb E_{\tau\sim p_{
\text{prior}}}\left[\sum_{t=1}^T \gamma^{T-t}(r(x_t,x_{t-1})-\alpha D_\text{KL}(p_{\theta^{\prime}}(x_{t-1}|x_t)||p_{\text{prior}} (x_{t-1}|x_t)))\right],
\end{align}
where $\tau=(x_T, x_{T-1},\ldots,x_{0})$ is a trajectory sampled from $\rho_{0}$ and $p_{\text{prior}}$, and $\alpha>0$ controls the strength of the regularization. We define soft Q-function of state-action pair $(x_t,x_{t-1})$ as follows:
\begin{align}
\label{eq: soft q definition}
Q^{*}_{\text{soft}}(x_t,x_{t-1}) 
&= r(x_t,x_{t-1}) \nonumber \\
& \quad + \mathbb{E}_{\tau \sim p_{\theta^*}}\left[\left.\sum_{s=1}^{t-1} \gamma^{t-s} \left(r(x_s, x_{s-1}) - \alpha D_{\text{KL}} (p_{\theta^*}(\cdot|x_s) || p_{\text{prior}}(\cdot|x_s))\right) \right| x_t, x_{t-1} \right]
\end{align}
Following~\citet{uehara2024understanding}, KL-regularized soft Bellman equations are given by:
\begin{align}
V^*_{\text{soft}}(x_t)&\;=\;\alpha\,\log\mathbb{E}_{x_{t-1}\sim p_\text{prior}(\cdot\mid x_t)}\!\Bigl[\exp\!\bigl(\tfrac{1}{\alpha}\, Q^*_{\text{soft}}(x_t,x_{t-1})\bigr)\Bigr],\label{eq: soft V}\\
Q^*_{\text{soft}}(x_t,x_{t-1})&\;=\;r(x_t,x_{t-1})+\gamma\cdot\,\mathbb{E}_{x_{t-1}\sim p_\text{prior}(\cdot\mid x_{t})}\bigl[V^*_{\text{soft}}(x_{t-1})\bigr]\label{eq: soft Q}.
\end{align}
Under the discounted diffusion MDP with sparse reward and deterministic transition of \Cref{section: mdp}, we can set terminal conditions $V^*_\text{soft}(x_0)=0$,  $Q^*_\text{soft}(x_1,x_0)=r(x_0)$, and approximate the soft Q-function by using Tweedie's formula~\citep{efron2011tweedie, li2024derivative} as follows:
\begin{align}
Q^*_{\text{soft}}(x_t, x_{t-1}) &\approx \gamma^{t-1} r(\hat x_0(x_{t-1})),
\end{align}
where $\hat x_0(x_t)=\mathbb E_{x_0\sim p_\text{prior}}[x_0|x_t]$ denotes the approximated posterior mean. (See \Cref{app: Approximation of the Soft Q-function} for details.)

\begin{figure}[t]
\vspace{-20pt}
\begin{center}
\centerline{\includegraphics[width=\textwidth]{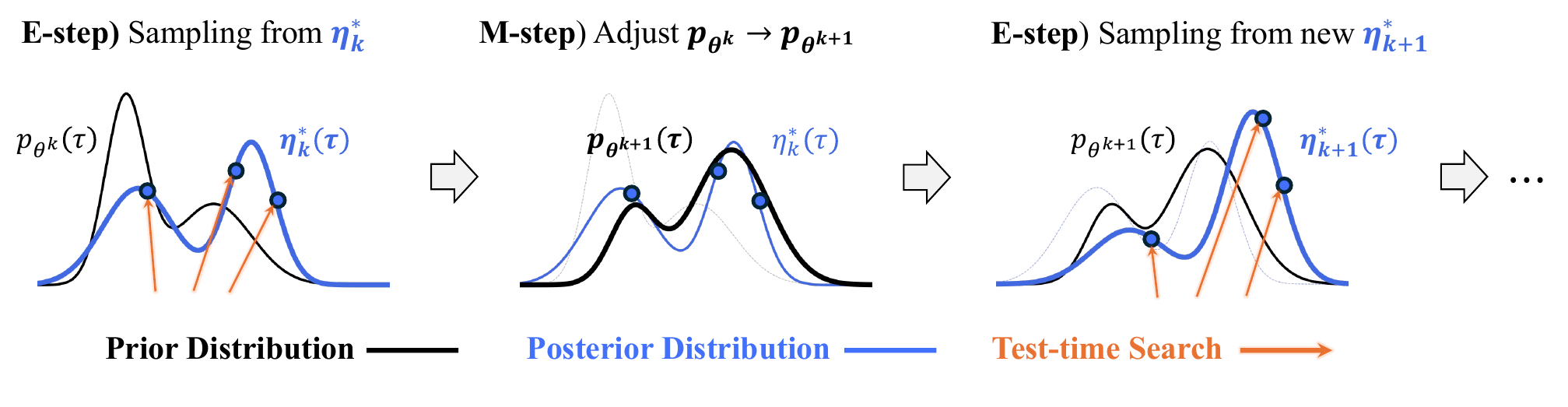}}
\end{center}
\vspace{-0pt}
\caption{Conceptual illustration of DAV. DAV alternates between E-step, where trajectories are obtained via test-time search, and M-step, where the diffusion model parameters $\theta$ are updated by amortizing the posterior into the policy. By iterating these two steps, DAV progressively refines the diffusion model toward a multi-modal aligned distribution.}
\label{figure:main_fig}
\vspace{-30pt}
\end{figure}

\section{Diffusion Alignment as Variational EM}
In this section, we introduce a detailed mechanism of \textbf{Diffusion Alignment as Variational Expectation-Maximization (DAV)}. As illustrated in \Cref{figure:main_fig}, DAV aligns diffusion models by iteratively alternating between an E-step and an M-step. The E-step involves a test-time search guided by a soft Q-function to effectively discover high-reward, multi-modal trajectories from the variational posterior distribution. Subsequently, in the M-step, the diffusion model is updated by distilling the information from searched trajectories through the E-step, progressively learning to generate outputs that are aligned with the reward function. The training procedure of our method is summarized in the pseudo code provided in \Cref{app: pseudo code of DAV}.
\subsection{Variational Expectation-Maximization Formulation}
Inspired by~\citet{levine2018reinforcement}, we cast diffusion alignment as maximizing the likelihood of a binary optimality variable $\mathcal O$, defined as $p_\theta(\mathcal O=1|\tau)\propto \exp (\sum_{t=1}^Tr_t(x_t,x_{t-1})/\alpha)$. This directly implies that the alignment objective is to maximize the marginal likelihood of optimality as follows:
\begin{equation*}
\max_\theta\log p_\theta(\mathcal O=1).
\end{equation*}
Importantly, $\theta$ does not directly parameterize this marginal likelihood, but instead induces a distribution over trajectories by $p_\theta(\tau)=p_T(x_T)\prod_{t=1}^T p_\theta(x_{t-1}|x_t)$. 
In other words, since $\theta$ specifies the diffusion reverse process, alignment can be formulated by introducing a latent trajectory variable $\tau$ that bridges between the model parameters and the optimality variable. The reverse process yields a trajectory $\tau$, which contains $x_0$ that is subsequently evaluated by the reward function. Hence, $\tau$ acts as an unobserved latent variable connecting $\theta$ to the observed optimality outcome $\mathcal O$. The resulting incomplete log-likelihood is expressed as $\log p_{\theta}(\mathcal{O}=1)=\log \int p_\theta(\tau,\mathcal O=1)d\tau.$

Directly maximizing the incomplete log-likelihood is intractable due to the hierarchical structure of denoising trajectories. Instead, we introduce a variational distribution $\eta(\tau)=p_T(x_T)\prod_{t=1}^T \eta(x_{t-1}\mid x_t)$, to approximate the intractable posterior $p_\theta(\tau|\mathcal O=1)$ and convert the marginal likelihood optimization into a tractable variational inference problem as follows (See \Cref{app: Evidence Lower Bound of the Incomplete Likelihood} for details):
\begin{align}
\log p_{\theta}(\mathcal{O}=1) 
&\ge  \mathbb E_{\tau\sim \eta}\left[\sum_{t=1}^T\left(\frac{  r(x_t,x_{t-1})}{\alpha}+ \log \frac{p_\theta(x_{t-1}\mid x_t)}{\eta(x_{t-1}\mid x_t)}\right)\right]=\mathcal J_\alpha(\eta,p_{\theta}),
\end{align}
\textbf{Introducing discount factor.}
The high stochasticity of the diffusion reverse process makes early denoising steps less impactful on the final outcome~\citep{ho2020denoising}, necessitating a discounting mechanism to attenuate credit assignment of $x_t$ with large timestep $t$.
\begin{restatable}[Lower bound on likelihood of $\mathcal O$ with discount factor $\gamma$]{proposition}{discountedlowerboundO}\label{proposition: discountedlowerboundO} 
Let $\gamma\in(0,1]$ be the discount factor. The likelihood of the optimality variable $\mathcal O$ admits the following lower bound:
\begin{align}
\mathcal J_{\alpha,\gamma}(\eta,p_\theta)
\triangleq\;
\mathbb E_{\tau \sim \eta(\tau)}\Bigg[
\sum_{t=1}^T \gamma^{T-t}
\left(
\frac{r(x_t,x_{t-1})}{\alpha}
+ \log \frac{p_{\theta}(x_{t-1}\mid x_t)}{\eta(x_{t-1}\mid x_t)}
\right)
\Bigg].
\end{align}
\end{restatable}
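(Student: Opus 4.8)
The plan is to prove the inequality $\log p_\theta(\mathcal O=1)\ge\mathcal J_{\alpha,\gamma}(\eta,p_\theta)$ by a backward induction over the diffusion timesteps that parallels the soft Bellman recursion of \Cref{eq: soft V,eq: soft Q}, rather than by a single application of Jensen's inequality to the incomplete likelihood. A one-shot Jensen argument applied to $\log p_\theta(\mathcal O=1)=\log\int p_\theta(\tau)\,p_\theta(\mathcal O=1\mid\tau)\,d\tau$ against the proposal $\eta$ reproduces, as in the undiscounted case, the reward and the log-density ratio $\log\frac{p_\theta(x_{t-1}\mid x_t)}{\eta(x_{t-1}\mid x_t)}$ with a common, timestep-independent weight; producing the factor $\gamma^{T-t}$ in front of \emph{both} pieces requires the discount to compound multiplicatively along the chain, which is captured only by a recursion.

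I would introduce two families of functions on the state space. First, for fixed $\eta,p_\theta$, put $G_0\equiv 0$ and $G_t(x_t)\triangleq\mathbb E_{x_{t-1}\sim\eta(\cdot\mid x_t)}\big[\tfrac{r(x_t,x_{t-1})}{\alpha}+\log\tfrac{p_\theta(x_{t-1}\mid x_t)}{\eta(x_{t-1}\mid x_t)}+\gamma\,G_{t-1}(x_{t-1})\big]$; unrolling the recursion and taking $\mathbb E_{x_T\sim p_T}$ outside, the step-$t$ summand acquires exactly the coefficient $\gamma^{T-t}$, so $\mathbb E_{x_T\sim p_T}[G_T]=\mathcal J_{\alpha,\gamma}(\eta,p_\theta)$ (the shared base $p_T$ cancels in every ratio, consistent with $\log\frac{p_\theta(\tau)}{\eta(\tau)}=\sum_t\log\frac{p_\theta(x_{t-1}\mid x_t)}{\eta(x_{t-1}\mid x_t)}$). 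Second, put $W_0\equiv 0$ and $W_t(x_t)\triangleq\log\mathbb E_{x_{t-1}\sim p_\theta(\cdot\mid x_t)}\big[\exp(\tfrac{r(x_t,x_{t-1})}{\alpha}+\gamma\,W_{t-1}(x_{t-1}))\big]$, the $\gamma$-discounted soft value of $p_\theta$. Changing measure to $\eta(\cdot\mid x_t)$ and applying Jensen to the concave $\log$ yields the one-step inequality $W_t(x_t)\ge\mathbb E_{x_{t-1}\sim\eta(\cdot\mid x_t)}\big[\log\tfrac{p_\theta(x_{t-1}\mid x_t)}{\eta(x_{t-1}\mid x_t)}+\tfrac{r(x_t,x_{t-1})}{\alpha}+\gamma\,W_{t-1}(x_{t-1})\big]$; since $\gamma>0$, an induction on $t$ with base case $W_0=G_0$ propagates $W_{t-1}\ge G_{t-1}$ (pointwise) into $W_t\ge G_t$, whence $\mathbb E_{x_T\sim p_T}[W_T]\ge\mathcal J_{\alpha,\gamma}(\eta,p_\theta)$.

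It then remains to dominate $\mathbb E_{x_T\sim p_T}[W_T]$ by $\log p_\theta(\mathcal O=1)$: Jensen gives $\mathbb E_{x_T\sim p_T}[W_T]\le\log\mathbb E_{x_T\sim p_T}[e^{W_T}]$, and expanding $e^{W_T}$ through the recursion with $e^{\gamma W_{t-1}}=(e^{W_{t-1}})^{\gamma}$ one uses $\gamma\le 1$ together with $W_t\ge 0$ — which follows by induction from reward non-negativity, the normalization adopted in all our tasks — to get $(e^{W_{t-1}})^{\gamma}\le e^{W_{t-1}}$, so each tempering step only shrinks the value; telescoping down to $W_0=0$ gives $\mathbb E_{x_T\sim p_T}[e^{W_T}]\le\mathbb E_{\tau\sim p_\theta}[\exp(\sum_{t=1}^T r(x_t,x_{t-1})/\alpha)]\propto p_\theta(\mathcal O=1)$, and chaining the three estimates proves the claim (at $\gamma=1$ every tempering step is an equality, recovering the undiscounted bound). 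I expect this last step to be the main obstacle: unlike the $\gamma=1$ case the discounted soft recursion does not telescope directly into the optimality likelihood, so controlling the accumulated tempering needs both $\gamma\le1$ and a sign condition on the reward (so that $u\mapsto u^{\gamma}$ contracts toward $1$ from above); without non-negativity one only obtains a bound against the discounted partition function $\log\mathbb E_{x_T\sim p_T}[e^{W_T}]$, and an additional reward-shift argument — absorbed into the proportionality constant defining $\mathcal O$ — is then needed to return to $\log p_\theta(\mathcal O=1)$.
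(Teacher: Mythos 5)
Your proof takes a genuinely different route from the paper's, and its two core steps are sound, but only under the extra sign hypothesis you invoke. The paper's own proof (\Cref{app: proposition 1}) does not work with the original trajectory distribution at all: following \citet{levine2018reinforcement}, it builds a modified MDP with an absorbing state entered with probability $1-\gamma$ at each step (zero reward and trivial dynamics once absorbed), applies the same one-shot Jensen/ELBO derivation as \Cref{eq:elbo-appendix} to that modified MDP, and then uses the law of total expectation over the survival event (probability $\gamma^t$) to show that this ELBO, rewritten in the original variables, is exactly $\mathcal J_{\alpha,\gamma}$. There the discount is a termination probability, no condition on the reward sign is needed, and the quantity being lower-bounded is the optimality likelihood under the absorbing-state dynamics. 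Your route instead keeps the original reverse process: dominating $\mathcal J_{\alpha,\gamma}$ by the discounted soft value $\mathbb E_{p_T}[W_T]$ via per-step Jensen and backward induction is correct, and the tempering bound $(e^{W_{t-1}})^{\gamma}\le e^{W_{t-1}}$ that telescopes $\mathbb E_{p_T}[e^{W_T}]$ into $\mathbb E_{\tau\sim p_\theta}[\exp(\sum_t r_t/\alpha)]$ is valid precisely because non-negative rewards give $W_{t-1}\ge 0$.

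The caveat you flag at the end is, however, a real gap rather than a removable technicality, and the reward-shift patch you propose does not close it. For sign-unrestricted rewards the inequality against the \emph{original} likelihood is simply false: with the sparse reward $R(x_0)\equiv -C$, $C>0$, $T\ge 2$, $\gamma<1$ and $\eta=p_\theta$, one has $\mathcal J_{\alpha,\gamma}=-\gamma^{T-1}C/\alpha>-C/\alpha=\log\mathbb E_{\tau\sim p_\theta}\bigl[\exp\bigl(R(x_0)/\alpha\bigr)\bigr]$, so the claimed lower bound is violated. A shift $r\mapsto r+c$ cannot repair this because the two sides scale asymmetrically: the log-likelihood shifts by $c/\alpha$ while $\mathcal J_{\alpha,\gamma}$ shifts by only $\gamma^{T-1}c/\alpha$, so the bound you obtain after shifting is weaker by $(1-\gamma^{T-1})c/\alpha$ and degenerates as $c$ grows. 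This is exactly why the paper changes the \emph{measure} (absorbing-state MDP) rather than the reward. In short: your argument proves a correct but different statement (original likelihood, non-negative rewards); to obtain the proposition in the generality the paper intends, you need the absorbing-state construction or an equivalent reinterpretation of the optimality variable under discounting.
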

\noindent\textit{Proof.} See \Cref{app: proposition 1}.

Optimizing the ELBO, $\mathcal J_{\alpha,\gamma}(\eta,p_\theta)$, is often approached with an EM-based RL algorithm, alternating between the E-step (posterior inference) and the M-step (model update)~\citep{dayan1997using}. However, prior EM-based RL approaches exhibit a critical weakness in the E-step. They approximate the posterior by reweighting on-policy samples or past experiences from a replay buffer~\citep{peters2007reinforcement, abdolmaleki2018maximum, nair2020awac}. However, if the behavioral policy deviates significantly from the posterior distribution $p_\theta(\tau|\mathcal O=1)$, this approach critically misspecifies the posterior, guiding the M-step toward a biased and suboptimal distribution.

Therefore, we redesign the EM-based RL for diffusion alignment. Let $\theta^k$ be the model parameter at the $k$‑th iteration. In the E-step, we first determine the posterior distribution that maximizes the ELBO, i.e., $\eta^*_k(\tau)=\argmax_\eta \mathcal J_{\alpha, \gamma}(\eta,p_{\theta^k})$. Then, we employ test-time search~\citep{singhal2025a, kim2025test} to obtain approximate samples that follow $\eta^*_k(\tau)$.  In the M-step, we maximize the ELBO by updating the model parameter, $\theta^{k+1}=\argmax_{\theta}\mathcal{J}_{\alpha,\gamma}(\eta^{*}_{k},p_{\theta})$, which corresponds to minimizing the forward KL divergence using the samples from the $\eta^{*}_k$. 

\subsection{E-step: Test-time Search For Posterior Inference}\label{sec:estep}

In the E-step, we sample trajectories from the variational posterior distribution $\eta^*_k$ by employing test-time search. We first determine $\eta^*_k(\tau) = \argmax_\eta \mathcal J_{\alpha,\gamma}(\eta, p_{\theta^k})$. Since our objective is equivalent to the KL-regularized RL objective in \Cref{eq:rl-regularized maxent rl objective}, the optimal variational distribution, $\eta^*_k(\tau)$, is the production of the soft optimal policy that takes the form of a reward-tilted distribution:
\begin{align}
\label{eq: optimal soft policy}
\eta_k^*(x_{t-1}|x_t)\propto p_{\theta^k}(x_{t-1}|x_t)\exp(Q^*_{\text{soft},\theta^k}(x_t,x_{t-1})/\alpha).
\end{align}
The derivation is detailed in \Cref{app: posterior_derivation}. However, directly sampling from $\eta_k^{*}$ is intractable. 
To overcome this, we approximate it using a two-stage local search to generate the subsequent state, $x_{t-1}$. The first stage of this search involves sampling a set of candidate particles. Specifically, we draw M intermediate particles, $\{x_{t-1}^m\}_{m=1}^M$, from a proposal distribution, $\hat \eta_k$.
If the gradient signal of the reward is available, we can construct an effective proposal distribution by using gradient guidance~\citep{grathwohl2021oops, kim2025inference}. Subsequently, we refine the intermediate particles via importance sampling, which effectively pushes the samples closer to $\eta_k^{*}$~\citep{li2024derivative}. We detail our search procedure in \Cref{app: search_techniques}.

Note that the test-time search in DAV is a modular component. This design allows for any algorithm capable of approximating the target posterior distribution to be substituted into the E-step. Therefore, DAV is not tied to a specific search technique and can directly benefit from future advancements in test-time search methods \citep{jain2025diffusion, zhang2025inference, yoon2025psi}.

\subsection{M-step: Amortizing Test-time Search into Diffusion Models}
In the M-step, we update $\theta$ by distilling trajectories from the E-step. This corresponds to maximizing ELBO by projecting $\eta^*_k$ onto the diffusion $p_\theta$ via forward KL minimization on searched trajectories:
\begin{align}
\theta^{k+1}
=\argmax_\theta \mathcal J_{\alpha,\gamma}(\eta^*_k(\tau),\theta)=\argmin_\theta D_\text{KL}(\eta^*_k(\tau)||p_\theta(\tau)).
\end{align}
Minimizing this forward KL divergence is equivalent to maximizing the log-likelihood of the trajectories from $\eta^*_k$. Since the policy update is performed via gradient ascent rather than an exact maximization, it constitutes a partial M-step in a Generalized EM framework \citep{dempster1977maximum}. Nevertheless, as long as the gradient ascent increases the ELBO, the monotonic improvement property of EM is preserved. The training objective for \textbf{DAV} is thus:
\begin{align}
\label{eq: DAV loss function}
\mathcal L_\text{DAV}(\theta)=
\mathbb{E}_{\tau\sim \eta^*_k}\left[ -\log p_{\theta}(\tau) \right]=\mathbb E_{\tau\sim \eta^*_k}\left[\sum_{t=1}^T -\log p_\theta(x_{t-1}|x_t)\right].
\end{align}
To prevent the capability loss of the pretrained model, we introduce \textbf{DAV-KL}, which adds a KL-divergence term to penalize deviation from the initial pretrained policy  $p_{\theta^0}$:
\begin{align}
\label{eq: DAV-KL loss function}
\mathcal L_\text{DAV-KL}(\theta)=\mathbb E_{\tau\sim \eta^*_k}\left[\sum_{t=1}^T -\log p_\theta(x_{t-1}|x_t)\right] + \lambda D_\text{KL}(p_\theta(x_{t-1}|x_t)||p_{\theta^0}(x_{t-1}|x_t)).
\end{align}
where the coefficient $\lambda$ controls the trade-off between aligning with the expert policy $\eta^*_k$ and preserving the knowledge of the pretrained model. 

\section{Experiments}
This section empirically evaluates the ability of our framework to optimize rewards while preserving sample diversity and naturalness in continuous and discrete diffusion. We demonstrate the versatility of our framework across two distinct data modalities: text-to-image synthesis using a latent diffusion model~\citep{rombach2022high} and DNA sequence design via a discrete diffusion model~\citep{sahoo2024simple}. We denote the amortized policies as DAV and DAV-KL, and their posterior samples, obtained through test-time inference as explained in \Cref{sec:estep}, as DAV Posterior and DAV-KL Posterior, respectively. Implementation details and hyperparameter settings for all experiments are provided in \Cref{app: experimental details}. To reflect the versatility of DAV, our baselines mainly consist of methods that are agnostic to data modality and reward function differentiability. We also compare our approach with a representative test-time search method, as it is the core mechanism of the E-step.

\subsection{Continuous Diffusion: Text-to-Image Synthesis}

\begin{figure}[t]
\vspace{-5pt}
\begin{center}
\centerline{\includegraphics[width=\textwidth]{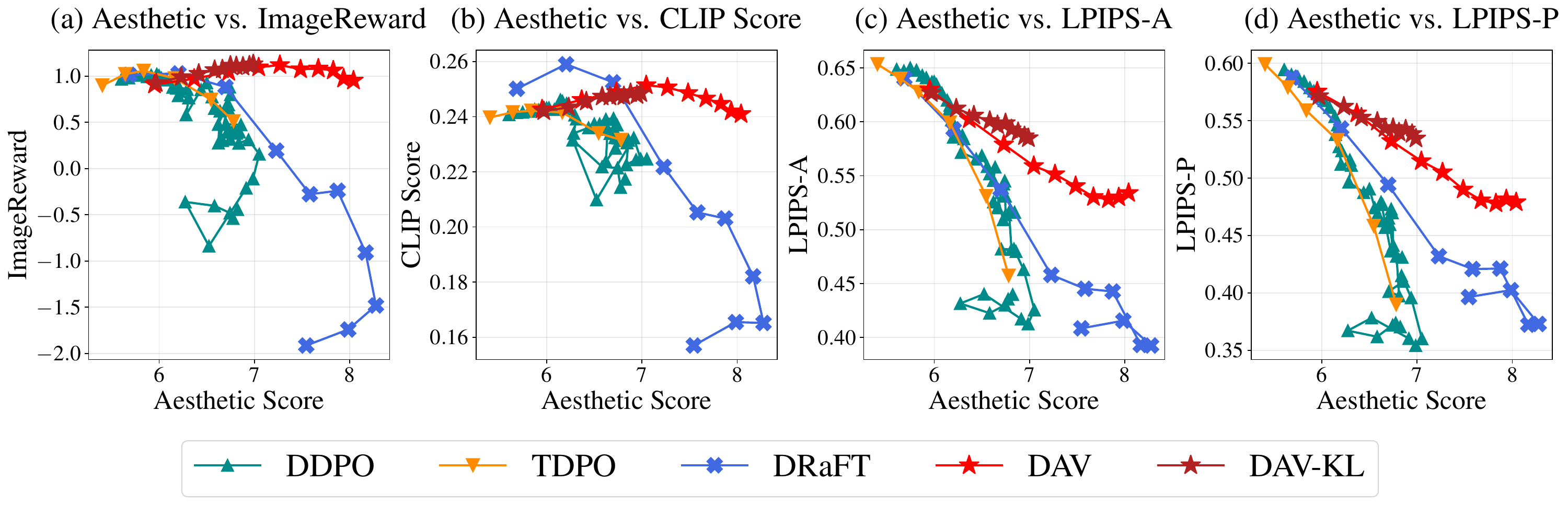}}
\end{center}
\vspace{-10pt}
\caption{Training dynamics of our methods and baselines, with performance marked every 10 epochs. All methods were trained for 100 epochs, except for DDPO, which was trained for 500 epochs. Our approaches successfully preserve alignment score and diversity compared to baselines.}
\label{figure:aesthetic_x_axis_1x4_comparison}
\vspace{-10pt}
\end{figure}

\begin{table}[t]
\centering
\caption{Comparison on text-to-image synthesis benchmarks. All results are reported as mean with standard deviation in parentheses.  Our methods and TDPO are shown at the 100th epoch, while DDPO and DRaFT are taken from the last checkpoint before over-optimization. Methods above the midline are fine-tuning approaches, and those below are test-time search methods.}
\label{table:comparision_with_tti}
\resizebox{0.65\textwidth}{!}{
\renewcommand{\arraystretch}{1.2}
\begin{tabular}{lccc}
\toprule
\makecell{Method} & \makecell{Aesthetic $(\uparrow)$} & \makecell{LPIPS-A $(\uparrow)$} & \makecell{ImageReward $(\uparrow)$} \\
\midrule
Pretrained       & 5.40 (0.01) & \textbf{0.65 (0.01)} & 0.90 (0.01) \\
DDPO             & 6.83 (0.16) & 0.48 (0.05) & 0.27 (1.01) \\
TDPO             & 6.78 (0.28) & 0.39 (0.10) & 0.51 (0.47) \\
DRaFT            & 7.22 (0.22) & 0.46 (0.04) & 0.19 (0.64) \\
\rowcolor{blue!5} DAV              & 8.04 (0.07) & 0.53 (0.03) & 0.95 (0.21) \\
\rowcolor{blue!5} DAV-KL           & 6.99 (0.04) & 0.58 (0.01) & 1.13 (0.04) \\
\cmidrule(l{0.2em}r{0.2em}){1-4}
DAS              & 7.22 (0.01) & 0.65 (0.01) & 1.07 (0.03) \\
\rowcolor{blue!5} DAV Posterior    & \textbf{9.18 (0.07)} & 0.53 (0.01) & 0.91 (0.21) \\
\rowcolor{blue!5} DAV-KL Posterior & 8.66 (0.09) & 0.58 (0.01) & \textbf{1.14 (0.08)} \\
\bottomrule
\end{tabular}
}
\end{table}

\begin{figure}[t]
\vspace{-10pt}
\begin{center}
\centerline{\includegraphics[width=\textwidth]{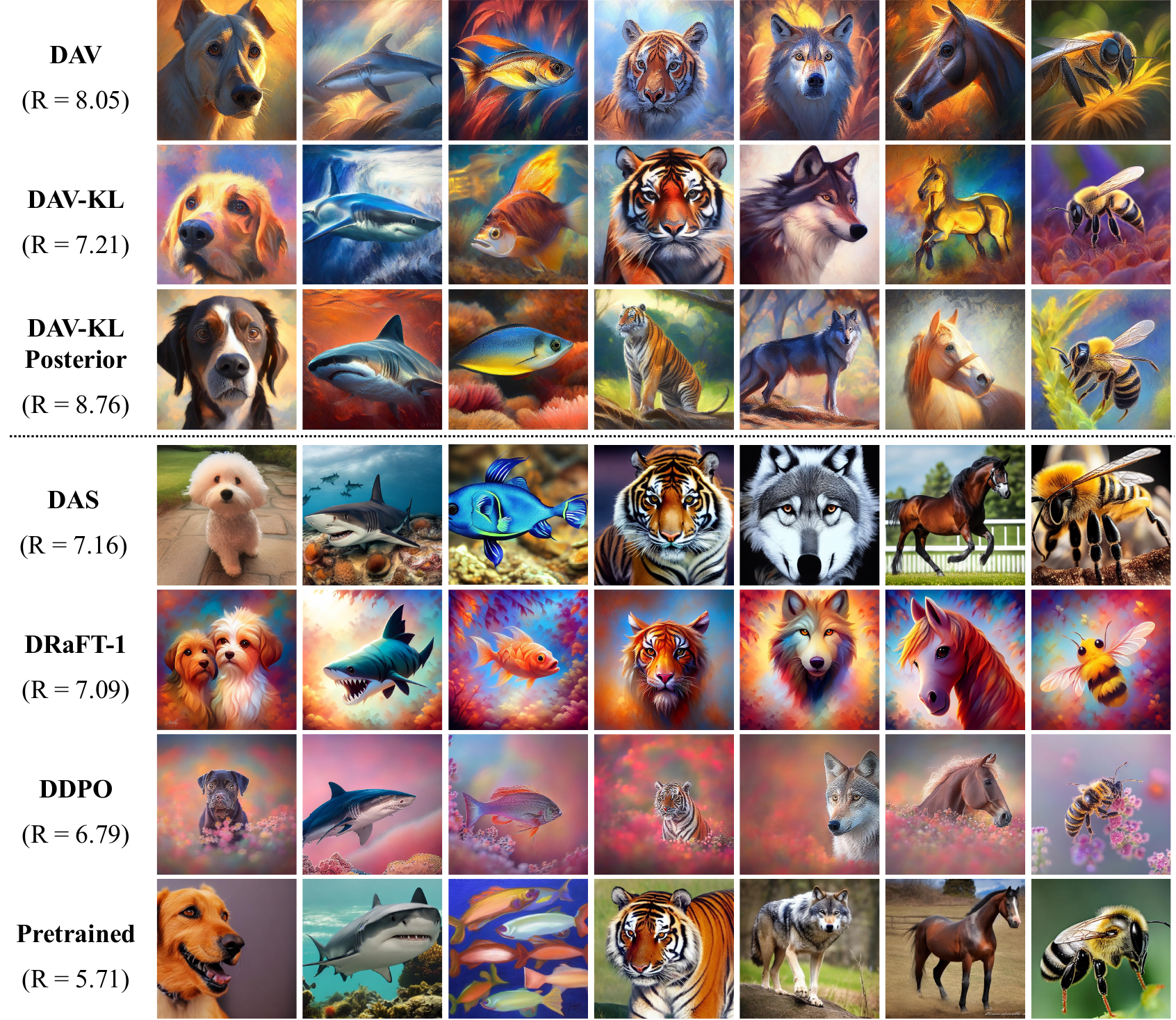}}
\end{center}
\vspace{-0pt}
\caption{Qualitative comparison of our methods with DAS, DRaFT-1, DDPO, and pretrained model. Results for our methods are reported after 100 epochs of fine-tuning. For DDPO and DRaFT, we sample images from the last checkpoint before significant collapse. {All images in the figure are collected from multiple runs, and the score reported under each method is the average aesthetic score of the seven sampled images.}}
\label{figure:qualitative_results}
\vspace{-20pt}
\end{figure}

We use Stable Diffusion v1.5 \citep{rombach2022high} as our base pretrained model across all text-to-image experiments. We employ a set of 40 simple animal prompts for fine-tuning as provided by \citet{kim2025test}. All results are averaged over three random seeds.

\subsubsection{Experimental Setup}

\textbf{Metrics.}
We use the differentiable LAION aesthetic score \citep{schuhmann2022laion} as our primary reward. We evaluate the results for two key failure modes: reward over-optimization and diversity collapse. To detect over-optimization, we measure prompt alignment using CLIPScore \citep{radford2021learning} for semantic consistency and ImageReward \citep{xu2023imagereward} for human preference. Sample diversity is quantified using LPIPS \citep{zhang2018unreasonable}, reporting the average distance across all samples (LPIPS-A) and within samples from the same prompt (LPIPS-P).

\textbf{Baselines.} We compare DAV against representative baselines include DDPO \citep{black2023training} for RL-based fine-tuning, DRaFT \citep{clark2023directly} for direct backpropagation, and DAS \citep{kim2025test} for test-time search. We also include TDPO \citep{zhang2024confronting}, a gradient-free RL-based method specifically designed to mitigate reward over-optimization.

\subsubsection{Results}
\Cref{figure:aesthetic_x_axis_1x4_comparison} shows the training dynamics for each method, demonstrating how alignment and diversity metrics evolve as the aesthetic reward is optimized. The results in \Cref{figure:aesthetic_x_axis_1x4_comparison}-(a) and \Cref{figure:aesthetic_x_axis_1x4_comparison}-(b) indicate that DAV and DAV-KL maintain a high alignment score while the baselines exhibit a sharp degradation in alignment scores. Similarly, \Cref{figure:aesthetic_x_axis_1x4_comparison}-(c) and \Cref{figure:aesthetic_x_axis_1x4_comparison}-(d) demonstrate that DAV and DAV-KL are substantially better at preserving sample diversity throughout the fine-tuning process.

\Cref{table:comparision_with_tti} provides a quantitative comparison of our methods against the baselines. The first key observation is a comparison between our methods and the fine-tuning baselines. DAV achieves a significantly higher reward (8.04) than both DDPO (6.83) and DRaFT (7.22), while maintaining a high ImageReward score (0.95) comparable to the pretrained model. In DAV-KL, KL-regularization induces a trade-off: it enhances diversity and ImageReward while incurring a reduction in reward. As illustrated in \Cref{figure:qualitative_results}, both DAV and DAV-KL generate well-aligned samples without producing the repetitive backgrounds seen in the outputs of DDPO and DRaFT-1.

The second key observation from \Cref{table:comparision_with_tti} is the comparison among test-time search methods. DAV Posterior achieves the highest aesthetic score (9.18), substantially outperforming DAS (7.22), while DAV-KL Posterior obtains the best ImageReward (1.14) with competitive aesthetic performance (8.66). Although our methods exhibit a slight decrease in diversity compared to DAS, our methods deliver clear gains in reward and alignment quality.

\textbf{Effect of E-step Variants in DAV. } 
In \emph{Search and distill}, the model is updated using samples from $\eta^{*}_{0}$ instead of $\eta^{*}_{k}$, skipping the update of the posterior distribution. In \emph{Reweight}, we replace test-time search with trajectory reweighting by exponentiated reward, following \citet{abdolmaleki2018maximum}. By ablating key components of the E-step, we evaluate the effectiveness of test-time search in their ability to optimize the ELBO.

\Cref{figure:elbo} illustrates the ELBO and the corresponding aesthetic score trends for each variation. We empirically validated that DAV consistently improves ELBO, although we cannot guarantee monotonic improvement due to approximation errors in the E-step. The ablated baselines not only fail to optimize ELBO, as expected, but also fail to increase the aesthetic score. These results suggest the importance of test-time computation in the E-step for our variational EM framework.

\textbf{Additional analysis.} 
We conduct sensitivity analyses of the hyperparameters of DAV in \Cref{app: sensitivity test}. Lower $\alpha$ increases reward but risks over-optimization, while higher values improve diversity. Setting $\gamma^T \approx 0$ stabilizes optimization by limiting early-step credit assignment. We analyze the effect of the number of distillation steps in the M-step and the number of particles in importance sampling. We also evaluated DAV with non-differentiable objectives, such as the compressibility and incompressibility rewards from~\citet{black2023training}, and we report the results in \Cref{app: qualitative-for-non-differentiable-reward}.

\begin{figure}[t]
\vspace{-20pt}
\begin{center}
\centerline{\includegraphics[width=0.6\textwidth]{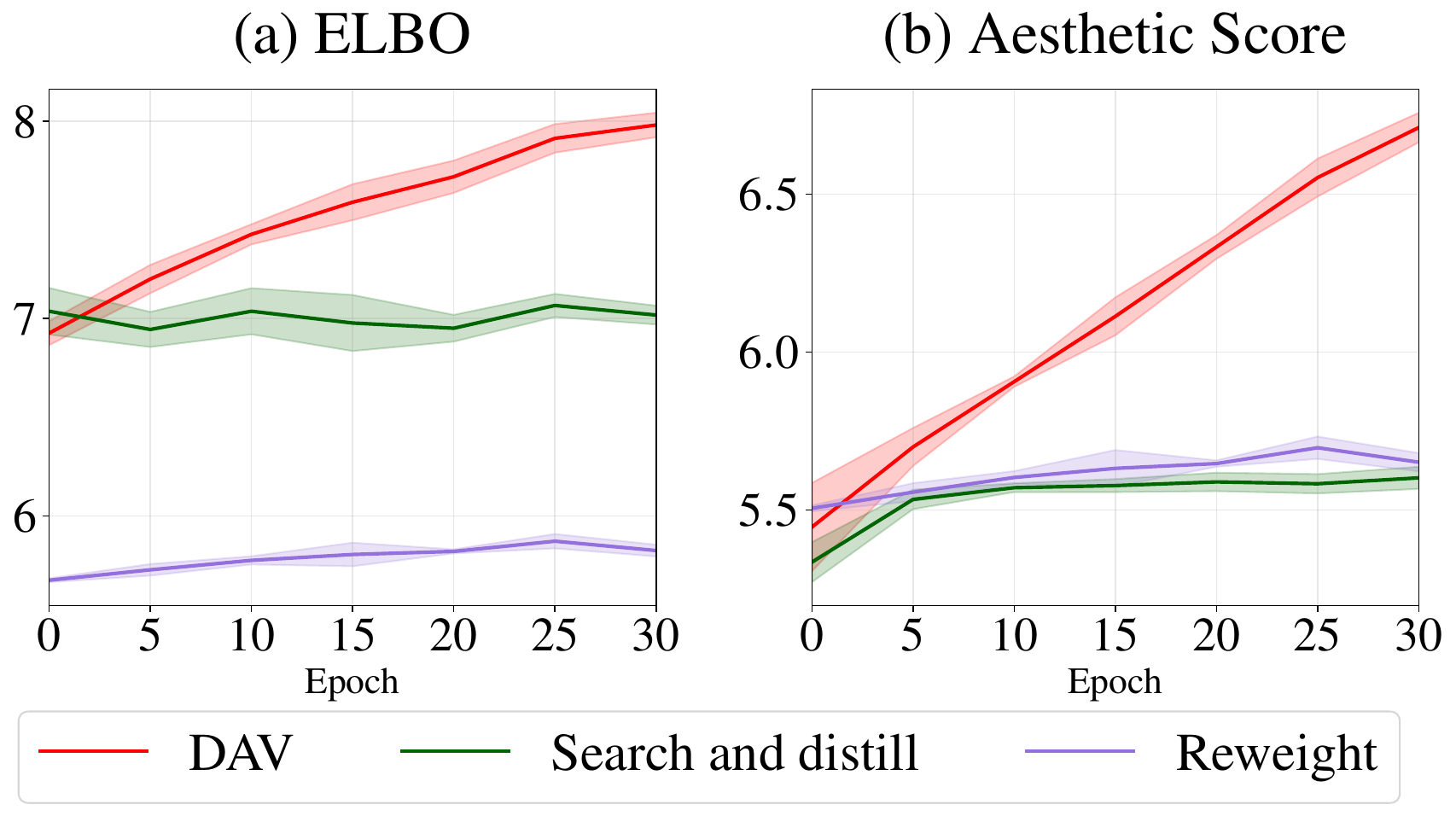}}
\end{center}
\vspace{-10pt}
\caption{Comparison of ELBO and aesthetic score trends for DAV and its ablated baselines.}
\label{figure:elbo}%
\vspace{-5pt}
\end{figure}

\vspace{-5pt}

\subsection{Discrete Diffusion: DNA Sequence Design}
We pretrained the masked discrete diffusion model~\citep{sahoo2024simple} on the large-scale DNA enhancer dataset from~\citet{gosai2023machine}. The dataset consists of 700k DNA sequences (200-bp length) and their corresponding enhancer activity in human cell lines, as measured by massively parallel reporter assays (MPRAs). All subsequent results are averaged over six random seeds.

\subsubsection{Experimental Setup}
\textbf{Metrics.} Our metric setup follows the methodology of~\citet{wang2025finetuning}. As the target reward, we use a trained Enformer network~\citep{avsec2021effective} to predict enhancer activity (Pred-Activity ($\uparrow$)). To avoid data leakage, we train two distinct Enformer models on disjoint splits of the enhancer dataset~\citep{gosai2023machine}: one provides rewards during fine-tuning, and the other serves solely as a held-out evaluator. We further assess the generated sequences on three criteria: diversity, naturalness, and biological validity. We quantify diversity using the average Levenshtein distance ($\uparrow$) \citep{haldar2011levenshtein} between generated sequences. We assess naturalness using the 3-mer Pearson Correlation (3-mer Corr ($\uparrow$)) against the top 0.1\% most active enhancers in the dataset. Finally, to measure biological validity and detect over-optimization, we use an independently trained classifier for Chromatin Accessibility (ATAC-Acc ($\uparrow$)) \citep{encode2012integrated,lal2024designing}.

\textbf{Baselines.} We compare DAV against representative baselines of discrete diffusion model alignment. For direct backpropagation on discrete models, our baseline is DRAKES~\citep{wang2025finetuning}. For RL-based fine-tuning, we compare against DDPO~\citep{black2023training} and VIDD \citep{su2025iterative}.

\subsubsection{Results}

\begin{figure}[t]
\vspace{-0pt}
\begin{center}
\centerline{\includegraphics[width=0.88\textwidth]{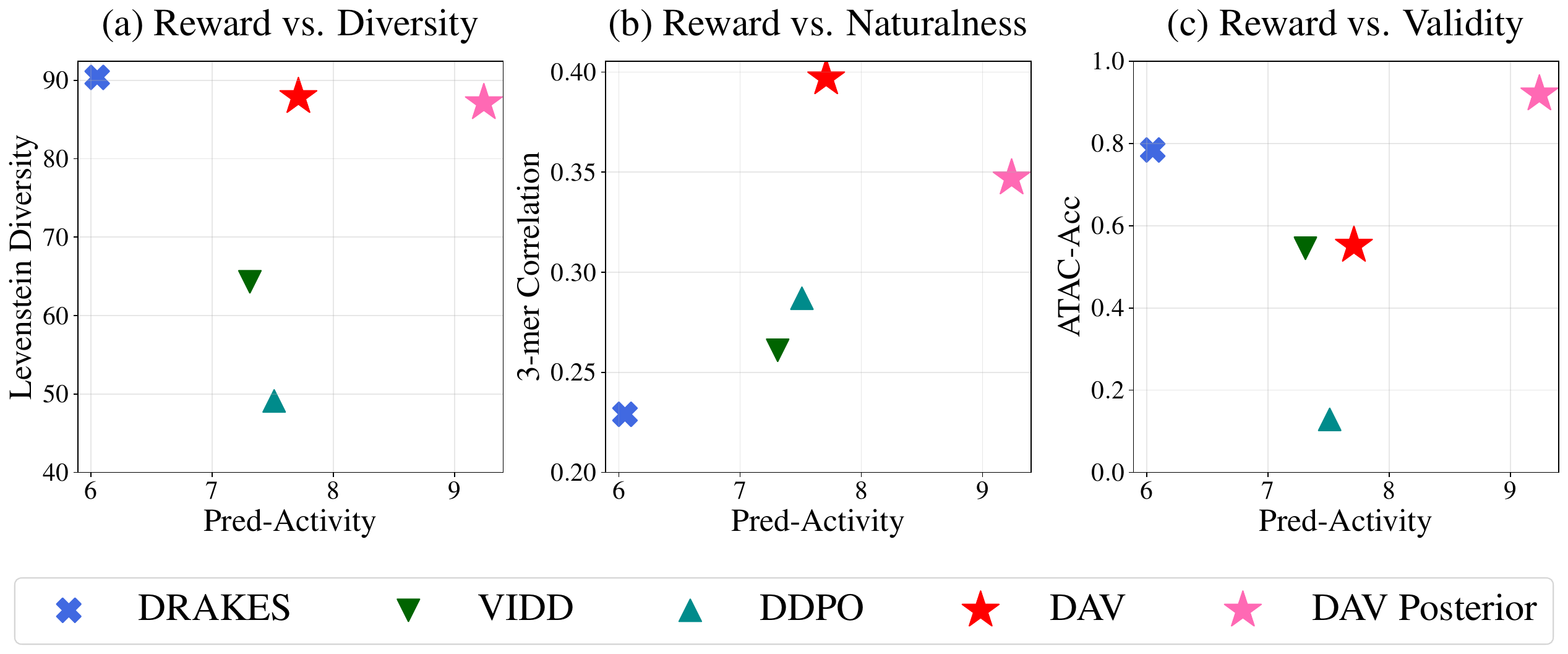}}
\end{center}
\vspace{-0pt}
\caption{Performance comparison of DAV and baseline models. The x-axis represents the reward (Pred-Activity), while the y-axis shows (a) Diversity (Levenshtein Diversity), (b) Naturalness (3-mer Corr), and (c) Validity (ATAC-Acc).}
\label{figure:dna_main}
\vspace{-10pt}
\end{figure}

As shown in \Cref{figure:dna_main}, our methods outperform baselines in generating enhancer DNA sequences. In \Cref{figure:dna_main}-(a), DAV achieves the best trade-off between reward and diversity. \Cref{figure:dna_main}-(b) shows that our methods achieve the best performance in both reward and naturalness.
\Cref{figure:dna_main}-(c) further confirms that our methods effectively balance reward and validity. Moreover, DAV Posterior further enhances reward optimization while maintaining diversity and naturalness. These results showcase the versatility of DAV in the discrete diffusion alignment, where it effectively optimizes for reward while maintaining high sample diversity. The entire table of results is presented in \Cref{table:main_dna}.

\section{Discussion}
\paragraph{Conclusion.}
We introduced DAV, a novel diffusion alignment framework based on the variational Expectation-Maximization algorithm. The {E-step} performs test-time search guided by a soft Q-function to discover high-reward, diverse, and natural samples; the {M-step} amortizes the result of test-time search by forward-KL distillation, fine-tuning diffusion models to the high-reward distribution while preserving the diversity and naturalness. We validated DAV on two distinct data modalities: continuous diffusion for image synthesis and discrete diffusion for DNA sequence design. Our results show that DAV effectively fine-tunes diffusion models to optimize rewards while mitigating over-optimization and diversity collapse in both domains.

\paragraph{Limitations and future works.}
Our work presents two primary avenues for future research. The first addresses the main limitation of our current framework: the computational overhead of the test-time search in the E-step. Fortunately, the E-step is modular, allowing for the direct integration of more efficient search algorithms. We anticipate that leveraging recent and future advances in test-time search~\citep{zhang2025inference, li2025dynamic} will substantially mitigate this bottleneck. The second limitation is the approximation error in the soft Q-function, which may guide the test-time search toward suboptimal distributions. This error primarily stems from the inaccuracy of Tweedie’s formula for approximating the posterior mean at $x_t$ with high noise levels~\citep{chung2023diffusion}. Future work could address this by employing distillation techniques~\citep{salimans2022progressive, song2023consistency}. Distillation techniques substantially reduce the number of denoising steps required for accurate $x_0$ prediction, expected to enhance posterior mean approximation at large timesteps and yield more reliable Q-function approximations.

\clearpage
\section*{Aknowledgement}
We thank the anonymous reviewers for their insightful comments and suggestions, which significantly improve our manuscript. This work is supported by the National Research Foundation of Korea (NRF) grant funded by the Korea government (MSIT) (No. RS-2024-00410082), and NRF grant funded by the Korea government(MSIT) (No. RS-2025-00563763).

\section*{The Use of Large Language Models} 
Large Language Models were employed exclusively for two auxiliary tasks: (1) minor polishing of the manuscript text for improving grammar and readability, and (2) limited assistance in code implementation for debugging syntax or refactoring functions. Importantly, LLMs did not contribute to the conception of the research problem, the development of the core methodology, or the design and execution of experiments. All critical ideas, methods, and analyses presented in this paper are the original work of the authors.
\bibliography{iclr2026_conference}
\bibliographystyle{iclr2026_conference}
\clearpage

\appendix
\crefalias{section}{appendix}
\crefname{appendix}{Appendix}{Appendices}
\Crefname{appendix}{Appendix}{Appendices}
\section{Pseudo Code of DAV}
\label{app: pseudo code of DAV}

\begin{algorithm}[H]
\caption{Diffusion Alignment as Variational EM (DAV)}
\label{alg:dav}
\begin{algorithmic}[1]
\Require Pretrained diffusion parameters $\theta^0$.

\State Initialize model parameters $\theta \leftarrow \theta^0$
\For{$k=1, \dots, N$}
    \State Initialize dataset of trajectories $\mathcal{D} \leftarrow \emptyset$
    \State \textcolor{blue}{\textit{\textbf{E-step: Posterior Exploration via Test-Time Search}}}
    \For{$b=1, \dots, B$}
        \State Sample initial noise $x_T \sim \rho_0(x_T)$
        \State Initialize trajectory $\tau \leftarrow \{x_T\}$
        \For{$t=T, \dots, 1$}
            \State Sample $M$ particles from $\hat{q}_k(\cdot | x_t)$ with \Cref{eq: continuous proposal} or \Cref{eq: discrete proposal}
            \State Compute weights $\{w^{m}\}_{m=1}^M$ with \Cref{eq: importance sampling}
            \State Resample $x_{t-1} \sim \text{Categorical}(\{w^{ m }\}_{m=1}^M)$
            \State Append $x_{t-1}$ to trajectory $\tau$
        \EndFor
        \State Add completed trajectory $\tau$ to $\mathcal{D}$
    \EndFor
    
    \State
    \State \textcolor{blue}{\textit{\textbf{M-step: Amortization via Forward KL Projection}}}
    \For{$\tau \in \mathcal D_k$}  
        \State Update $\theta$ by minimizing \Cref{eq: DAV loss function} or \Cref{eq: DAV-KL loss function}
    \EndFor
\EndFor
\State \textbf{return} Optimized parameters $\theta^N$
\end{algorithmic}
\end{algorithm}
\clearpage

\section{Approximation of the Soft Q-function}
\label{app: Approximation of the Soft Q-function}
In KL-regularized RL for diffusion fine-tuning, prior works approximate the Soft Q-function for the undiscounted MDP case where $\gamma=1$~\citep{uehara2024understanding, li2024derivative}. By leveraging the recursive soft Bellman equation and the posterior mean approximation by Tweedie's formula~\citep{efron2011tweedie, chung2023diffusion}, the soft Q-function can be approximated as:
\begin{align}
Q_\text{soft}^*(x_t,x_{t-1};\gamma=1)\approx r(\hat x_0(x_t))
\end{align}
However, this approximation does not hold in the more general discounted setting where $\gamma\neq1$. We find that for a discounted MDP, the recursive form of the Bellman equation instead yields lower and upper bounds on the Soft Q-function. For $t\geq2$ and $0<\gamma<1$, we derive the following inequalities: 
\begin{align} \alpha\gamma\log \mathbb{E}_{x_{0:t-1} \sim p_{\text{prior}}}\left[\exp\left(\frac{\gamma^{t-2}r(x_0)}{\alpha}\right)\right] \leq & Q^*_{\text{soft}}(x_t,x_{t-1}) \nonumber \leq  \alpha \gamma^{t-1}\log\left(\mathbb E_{p_\text{prior}}\left[\exp\left(\frac{r(x_0)}{\alpha}\right)\right]\right). \end{align} 

For inducing upper inequalities, we start by substituting the definition of the soft Q-function into the soft value function, and we obtain the recursive expression of the soft value function:
\begin{align}
V^*_{\text{soft}}(x_t) 
&=\alpha\,\log\mathbb{E}_{x_{t-1}\sim p_{\text{prior}}(\cdot\mid x_t)}\!\Bigl[\exp\!\bigl(\frac{r(x_t,x_{t-1})}{\alpha}+\frac{\gamma}{\alpha}\, V^*_{\text{soft}}(x_{t-1})\bigr)\Bigr]
\end{align}
Based on the sparse reward MDP defined in \Cref{section: mdp}, the immediate reward $r(x_t,x_{t-1})$ is zero for all steps $t\geq2$. This simplifies the recursion to
\begin{align}
V^*_{\text{soft}}(x_t) &=\alpha\,\log\mathbb{E}_{x_{t-1}\sim p_{\text{prior}}(\cdot\mid x_t)}\!\Bigl[\exp\!\bigl(\frac{1}{\alpha}\, V^*_{\text{soft}}(x_{t-1})\bigr)\Bigr], \quad \text{for}~~t\geq2.
\end{align}
With the terminal condition $V_{\text{soft}}^*(x_0)=0$, the soft Q-value at $t=1$ is simply the immediate reward, $Q_\text{soft}^*(x_1,x_0)=r(x_1,x_0)=r(x_0)$. By substituting this into the definition of the soft value function, we can derive the value at $t=1$ and subsequently the Q-function at $t=2$ as:
\begin{align}
V^*_\text{soft}(x_1)&=\alpha\log \mathbb E_{x_{0}\sim p_\text{prior}(\cdot|x_1)}[\exp(r(x_0))/\alpha]\\
Q_\text{soft}^*(x_2,x_1) &= \gamma\alpha \log \mathbb E_{x_0\sim p_\text{prior}(\cdot|x_1)}\left[\exp\left(\frac{r(x_0)}{\alpha}\right) \right]
\end{align}
To simplify the notation, we define the auxilary variable $\beta_t(x_t):= \exp\!\big(V^*_{\text{soft}}(x_t)/\alpha\big)$. From the definition of $V^*_\text{soft}$, the case at $t=1$ is $\beta_1(x_1)=\mathbb E_{x_0\sim p_\text{prior}}[\exp(r(x_0)/\alpha)]$. Then the recursion of soft Bellman equations for $t\geq 2$ can be written as:
\begin{align}
\beta_t(x_t) = \mathbb{E}_{x_{t-1} \sim p_{\text{prior}}} [\beta_{t-1}(x_{t-1})^\gamma ].
\end{align}
Since the function $z \mapsto z^\gamma$ is concave for $0<\gamma<1$, applying Jensen's inequality yields the following relationship: $(\mathbb E[Z])^\gamma \geq \mathbb E[Z^\gamma]$. By iteratively applying this result to the recursive definition of $\beta_t(x_t)$ ), we can establish a lower bound as follows:
\begin{align}
\beta_t(x_t) &= \mathbb{E}_{x_{t-1} \sim p_{\text{prior}}} [\beta_{t-1}(x_{t-1})^\gamma ] \\
&=\mathbb{E}_{x_{t-1} \sim p_{\text{prior}}}[(\mathbb{E}_{x_{t-2} \sim p_{\text{prior}}} [\beta_{t-2}(x_{t-2})^\gamma])^\gamma]
\\&\geq\mathbb{E}_{x_{t-2:t-1} \sim p_{\text{prior}}}[(\beta_{t-2}(x_{t-2}))^{\gamma^2}]
\\
&\geq \mathbb{E}_{x_{t-3:t-1} \sim p_{\text{prior}}}[(\beta_{t-3}(x_{t-3}))^{\gamma^3}]
\\
&\cdots\\
&\geq \mathbb{E}_{x_{1:t-1} \sim p_{\text{prior}}}[(\beta_{1}(x_{1}))^{\gamma^{t-1}}] \\
&= \mathbb{E}_{x_{1:t-1} \sim p_{\text{prior}}}[\mathbb E_{x_0\sim p_\text{prior}}[\exp(r(x_0)/\alpha)]^{\gamma^{t-1}}]
\end{align}

By applying Jensen's inequality again, we can obtain the lower bound of the $\beta_t(x_t)$:
\begin{align}
\mathbb{E}_{x_{0:t-1} \sim p_{\text{prior}}}[(\exp(r(x_0)/\alpha)^{\gamma^{t-1}}]
 \leq \beta_t(x_t)
\end{align}

In a similar manner, the upper bound of $\beta_t(x_t)$ for $t\geq 2$ is derived using $\mathbb E[Z^\gamma] \leq (\mathbb E[Z])^\gamma$:
\begin{align}
\beta(x_t)&=\mathbb E_{x_{t-1}\sim p_{\text{prior}}}[\beta_{t-1}(x_{t-1})^\gamma]\\
&\leq(\mathbb E_{x_{t-1}\sim p_{\text{prior}}}[\beta_{t-1}(x_{t-1})])^\gamma\\
&\leq(\mathbb E_{x_{t-1},x_{t-2}\sim p_{\text{prior}}}[\beta_{t-2}(x_{t-2})])^{\gamma^2}\\&\cdots
\\
&\leq(\mathbb E_{x_{1:t-1}\sim p_{\text{prior}}}[\beta_1(x_1)])^{\gamma^{t-1}}\\
&=(\mathbb E_{x_{1:t-1}\sim p_{\text{prior}}}[\mathbb E_{x_0\sim p_\text{prior}}[\exp(r(x_0)/\alpha)]])^{\gamma^{t-1}}
\end{align}
By the law of total expectation, the nested expectations can be combined into a single expectation over the trajectory $x_{0:t-1}$, which yields the final upper bound for $\beta_t(x_t)$:
\begin{align}
\beta_t(x_t)\leq (\mathbb E_{x_{0:t-1}\sim p_{\text{prior}}}[\exp(r(x_0)/\alpha)])^{\gamma^{t-1}}
\end{align}

Combining these results, the bounds on $\beta_t$ for $t\geq2$ is summarized as:
\begin{align}
\mathbb{E}_{x_{0:t-1} \sim p_{\text{prior}}}[(\exp(r(x_0)/\alpha))^{\gamma^{t-1}}]\leq\beta_t(x_t)\leq \mathbb E_{x_{0:t-1}\sim p_
\text{prior}}[\exp(r(x_0)/\alpha)])^{\gamma^{t-1}}
\end{align}

By substituting  $\beta_t(x_t)=\exp(V^*_\text{soft}(x_t)/\alpha)$ back into the inequality, we obtain the bounds on the exponentiated soft value function:
\begin{align}
\mathbb{E}_{x_{0:t-1} \sim p_\text{prior}}\left[\exp\left(\frac{\gamma^{t-1}r(x_0)}{\alpha}\right)\right]\leq&\exp\left(\frac{V^*_{\text{soft}}(x_t)}{\alpha}\right)\leq \left(\mathbb E_{x_{0:t-1}\sim p_\text{prior}}\left[\exp\left(\frac{r(x_0)}{\alpha}\right)\right]\right)^{\gamma^{t-1}}
\end{align}
Taking the logarithm of all parts and multiplying by $\alpha$ yields the bounds on the soft value function $V^*_{\text{soft}}(x_t)$:
\begin{align}
\alpha\log \mathbb{E}_{x_{0:t-1} \sim  p_\text{prior}}\left[\exp\left(\frac{\gamma^{t-1}r(x_0)}{\alpha}\right)\right]\leq&{V^*_{\text{soft}}(x_t)}\leq \alpha\gamma^{t-1}\log\left(\mathbb E_{x_{0:t-1}\sim p_\text{prior}}\left[\exp\left(\frac{r(x_0)}{\alpha}\right)\right]\right).
\end{align}
Using the relation $Q^*_{\text{soft}}(x_t, x_{t-1}) = \gamma V^{*_\text{soft}}(x_{t-1})$ for $t\geq3$ from \Cref{eq: soft Q}, we can derive the corresponding bounds for the soft Q-function:
\begin{align} \alpha\gamma\log \mathbb{E}_{x_{0:t-1} \sim  p_\text{prior}}\left[\exp\left(\frac{\gamma^{t-2}r(x_0)}{\alpha}\right)\right] \leq & Q^*_{\text{soft}}(x_t,x_{t-1}) \nonumber \leq  \alpha \gamma^{t-1}\log\left(\mathbb E_{p_\text{prior}}\left[\exp\left(\frac{r(x_0)}{\alpha}\right)\right]\right) .\end{align} 
A remarkable simplification occurs when we approximate the log-sum-exp terms in both the lower and upper bounds using Tweedie’s formula~\citep{efron2011tweedie, chung2023diffusion}. We find that both complex bounds are approximated to the same simple, closed-form expression: $\gamma^{t-1} r(\hat x_0(x_{t-1}))$. This result, which is also consistent with the exact solutions for the boundary cases at $t=1$ and $t=2$, yields the following unified, first-order approximation for the soft Q-function:
\begin{align}
Q^*_{\text{soft}}(x_t, x_{t-1}) &\approx \gamma^{t-1} r(\hat x_0(x_{t-1})).
\end{align}

\clearpage
\section{Derivation of Evidence Lower Bound}
\label{app: Evidence Lower Bound of the Incomplete Likelihood}
We derive the ELBO of the marginal log-likelihood of the optimality variable, $\log p_\theta(\mathcal O=1)$. The derivation begins by introducing a variational distribution $\eta(\tau)$ over the trajectories. By applying Jensen's inequality, we can derive the ELBO as follows:
\begin{align}
\log p_\theta(\mathcal O=1) 
&= \log \int p_{\theta}(\tau) p(\mathcal O=1|\tau)\,d\tau \\
&= \log \int \eta(\tau) \frac{p_{\theta}(\tau) p(\mathcal O=1|\tau)}{\eta(\tau)}\,d\tau \\
&\ge \int \eta(\tau) \left(\log p(\mathcal O=1|\tau) + \log \frac{p_{\theta}(\tau)}{\eta(\tau)} \right) d\tau \\
&\propto \mathbb E_{\tau\sim \eta}\left[\sum_{t=1}^T\left(\frac{  r(x_t,x_{t-1})}{\alpha}+ \log \frac{p_\theta(x_{t-1}|x_t)}{\eta(x_{t-1}|x_t)}\right)\right]=:J_\alpha(\eta,p_{\theta}) \label{eq:elbo-appendix}
\end{align}
This final expression constitutes the ELBO, which serves as our tractable surrogate objective for alignment.

\clearpage
\section{Proof of Proposition 1}\label{app: proposition 1}

\discountedlowerboundO*
\begin{proof}

For clarity, we stick to MDP notations in~\Cref{section: mdp}. First, as suggested in~\citet{levine2018reinforcement}, we introduce an absorbing state $s_{\text{absorb}}$ to take into account the discount factor $\gamma$ and let $\bar{\mathcal{S}}:=\mathcal{S}\cup\{s_{\text{absorb}}\}$. Each action $a$ at $s$ make transition to $s^{\prime}\in\mathcal{S}$ following the original dynamics $P$ with probability $\gamma$ and to $s_{\text{absorb}}$ with probability $1-\gamma$, i.e., the modified transition probability $\bar{P}$ becomes:
\begin{equation}
    \bar{P}(s^{\prime} \mid s\in\mathcal{S},a)=\begin{cases}
        \gamma P(s^{\prime} \mid s,a) &\text{if } s^{\prime}\in\mathcal{S} \\
    1-\gamma &\text{if } s^{\prime}=s_{\text{absorb}},
    \end{cases}
\end{equation}
and $\bar{P}(s_{\text{absorb}} \mid s_{\text{absorb}},a)=1$.

Similarly, the modified reward becomes:
\begin{equation}
    \bar{r}(s,a)=\begin{cases}
        r(s,a) &\text{if } s\in\mathcal{S}\\
        0 &\text{if } s=s_{\text{absorb}}.
    \end{cases}
\end{equation}
Also, $\bar{p}_{\theta}$ and $\bar{\eta}$ define distributions over trajectories in the modified MDP, with $\bar{p}_{\theta}(\cdot\vert s_{\text{absorb}})=\bar{\eta}(\cdot\vert s_{\text{absorb}})=\delta_{s_{\text{absorb}}}(\cdot)$ and $\bar{p}_{\theta}(\cdot\vert s)=p_{\theta}(\cdot\vert s)$, $\bar{\eta}(\cdot\vert s)=\eta(\cdot\vert s)$ if $s\in\mathcal{S}$.

We can derive the ELBO for the modified MDP in the same way as~\Cref{eq:elbo-appendix}, which gives:
\begin{equation}
    \log p_{\theta}(\mathcal{O}=1)\geq \mathbb{E}_{\tau\sim\bar{\eta},\bar{P}}\left[ \sum_{t=0}^{T-1}\left( \frac{\bar{r}(s_t,a_t)}{\alpha} + \log\frac{\bar{p}_{\theta}(a_t\vert s_t)}{\bar{\eta}(a_t \vert s_t)} \right) \right]=:\mathcal{J}_{\alpha,\gamma}(\bar{\eta},\bar{p}_{\theta})
\end{equation}

By linearity of expectation, we can swap the sum and the expectation:
\begin{equation}
    \mathcal{J}_{\alpha,\gamma}(\bar{\eta},\bar{p}_{\theta})=\sum_{t=0}^{T-1}\mathbb{E}_{(s_t,a_t)\sim\bar{\eta},\bar{P}}\left[ \frac{\bar{r}(s_t,a_t)}{\alpha} + \log\frac{\bar{p}_{\theta}(a_t\vert s_t)}{\bar{\eta}(a_t\vert s_t)} \right].
\end{equation}

Consider the event $E_t$, meaning the agent has not yet reached the absorbing state up to time $t$, i.e., $s_{t}\in\mathcal{S}, \forall t\in[0,1,\ldots,t]$ and $\text{Pr}(E_t)=\gamma^t$. Recall the law of total expectation:
\begin{equation*}
    \mathbb{E}_{(s_t,a_t)\sim\bar{\eta},\bar{P}}\left[\cdot\right] = \text{Pr}(E_t)\cdot\mathbb{E}_{(s_t,a_t)\sim\bar{\eta},\bar{P}}\left[\cdot\vert E_t\right] + (1-\text{Pr}(E_t))\cdot\mathbb{E}_{(s_t,a_t)\sim\bar{\eta},\bar{P}}\left[ \cdot\vert \neg E_t \right].
\end{equation*}
If $\neg E_t$, then $s_t=s_{\text{absorb}}$, $\bar{r}(s_t,a_t)=0$, and $\log \frac{\bar{p}_{\theta}(a_t\vert s_t)}{\bar{\eta}(a_t\vert s_t)}=0$. On the other hand, when conditioned on $E_t$, $\bar{\eta}(\cdot\vert s_t)=\eta(\cdot\vert s_t)$, $\bar{p}_{\theta}(\cdot\vert s_t)=p_{\theta}(\cdot\vert s_t)$, $\bar{r}(s_t,a_t)=r(s_t,a_t)$, and $\mathbb{E}_{(s_t,a_t)\sim\bar{\eta},\bar{P}}\left[\cdot\vert E_t\right]=\mathbb{E}_{(s_t,a_t)\sim\eta}\left[\cdot\right]$. Now, we can rewrite $\mathcal{J}_{\alpha,\gamma}$ in the original MDP:
\begin{align}
    \mathcal{J}_{\alpha,\gamma}(\bar{\eta},\bar{p}_{\theta})
    &=\sum_{t=0}^{T-1}\text{Pr}(E_{t})\cdot\mathbb{E}_{(s_t,a_t)\sim\bar{\eta},\bar{P}}\left[\left. \frac{\bar{r}(s_t,a_t)}{\alpha}+\log\frac{\bar{p}_{\theta}(a_t\vert s_t)}{\bar{\eta}(a_t\vert s_t)} \right\vert E_t\right]\\
    &=\sum_{t=0}^{T-1}\gamma^t\mathbb{E}_{(s_t,a_t)\sim\eta}\left[ \frac{r(s_t,a_t)}{\alpha}+\log\frac{p_{\theta}(a_t\vert s_t)}{\eta(a_t\vert s_t)} \right]\\
    &=\mathbb{E}_{\tau\sim\eta}\left[ \sum_{t=0}^{T-1}\gamma^t\left( \frac{r(s_t,a_t)}{\alpha}+\log\frac{p_{\theta}(a_t\vert s_t)}{\eta(a_t\vert s_t)} \right) \right].
\end{align}

By rewriting the last line using notations with $x_t$'s:
\begin{equation}
    \mathcal{J}_{\alpha,\gamma}(\eta,p_{\theta})=\mathbb{E}_{\tau\sim\eta}\left[ \sum_{t=1}^{T}\gamma^{T-t}\left( \frac{r(x_t,x_{t-1})}{\alpha} + \log\frac{p_{\theta}(x_{t-1}\vert x_t)}{\eta(x_{t-1}\vert x_t)} \right) \right].
\end{equation}

\end{proof}

\section{Details of Test-time Search for the E-step}
\label{app: methodology details}
This section details the test-time search procedure for DAV. In the E-step, we are required to sample trajectories from the optimal posterior distribution, $\eta^*(\tau)$. We first simplify the problem by defining an optimal policy, $\eta^*(x_{t-1}|x_t)$, and decompose the optimal posterior distribution as $\eta^*(\tau)=\eta_T(x_T)\prod_{t=1}^T\eta^*(x_{t-1}|x_t)$. We then draw samples from this policy using a two-stage process: (1) Constructing proposal distribution $\hat\eta(x_{t-1}|x_t)$ using gradient guidance from the reward function~\citep{grathwohl2021oops, dhariwal2021diffusion}, and (2) Employing importance sampling to refine these samples, correcting for mismatches with the true posterior.
 
\subsection{Deriving Soft Optimal Policy for Exploration}
\label{app: posterior_derivation}
For the fixed policy parameters of diffusion model at the $k$-th iteration, $p_{\theta^k}$, the optimal posterior policy $\eta_k^*(x_{t-1}|x_t)$ for $1\leq t \leq T$ is defined as the policy that maximizes the regularized objective $\mathcal J_{\alpha, \gamma}(\eta, p_{\theta^k})$. Because our objective $\mathcal J_{\alpha,\gamma}(\eta, p_{\theta^k})$ is equivalent to the KL-regularized RL objective in \Cref{eq:rl-regularized maxent rl objective}, $\eta^*_k(x_{t-1}|x_t)$ is the soft optimal policy. Let $Q^*_{\text{soft},\theta^k}$ denote the soft Q-function for the objective regularized with respect to the prior policy $p_{\theta^k}$. Then, the soft optimal policy is found by solving the following maximization problem:
\begin{align}
\eta_k^*(x_{t-1}|x_t)&=\argmax_{\eta(\cdot|x_t)}\mathbb E_{x_{t-1}\sim\eta(\cdot|x_t)}\left[\frac{Q^*_{\text{soft},\theta^k}(x_t,x_{t-1})}{\alpha}-\log\frac{\eta(x_{t-1}|x_t)}{p_{\theta^k}(x_{t-1}|x_t)}\right]\\
&=\argmax_{\eta(\cdot|x_t)}\mathbb E_{x_{t-1}\sim \eta(\cdot|x_t)}\left[\log \frac{p_{\theta^k}(x_{t-1}|x_t) \exp(Q^{*}_{\text{soft},\theta^k}(x_t,x_{t-1})/\alpha)}{\eta(x_{t-1}|x_t)}\right]\\
&=\argmin_{\eta(\cdot|x_t)} D_\text{KL}(\eta(x_{t-1}|x_t)||\frac 1 Z p_{\theta^k}(x_{t-1}|x_t) \exp(Q^*_{\text{soft},\theta^k}(x_t,x_{t-1})/\alpha))-\log Z\\
&=\frac 1 {Z} p_{\theta^k}(x_{t-1}|x_t) \exp(Q^*_{\text{soft},\theta^k}(x_t,x_{t-1})/\alpha),
\end{align}
where $Z=\int p_{\theta^k}(x_{t-1}|x_t)\exp(Q^*_{\text{soft},\theta^k}(x_{t-1}|x_t)/\alpha)dx_{t-1}$. To summarize, the soft optimal policy $\eta^*_k(x_{t-1}|x_t)$ takes the form of a Boltzmann distribution, where the prior policy $p_{\theta^k}$ is re-weighted by the exponentiated soft Q-function to favor actions with higher expected soft returns. 

\subsection{Gradient-Guided Test-time Search with Importance Sampling Correction}
\label{app: search_techniques}
We now detail the two-stage procedure for sampling from the soft optimal policy, $\eta^*_k(x_{t-1}|x_t)$, which is designed to generate high-reward samples while preserving the diversity of the prior model:

\begin{itemize}
    \item \textbf{Proposal construction}: If the reward function is differentiable, construct a proposal distribution $\hat \eta_k$ for $\eta^*_k$ using a first‑order Taylor expansion.
    \item \textbf{Importance sampling correction}: We employ importance sampling to correct for the distributional mismatch between our proposal distribution $\hat \eta_k$ and the optimal policy $\eta^*_k$.
\end{itemize}

\subsubsection{Constructing Proposal distribution via Taylor expansion}\label{section: proposal via taylor}
For complex reward functions where the target posterior deviates significantly from the current policy, leveraging the gradient of a differentiable reward function can dramatically improve the efficiency of the E-step. This gradient-guided exploration provides a more direct path to discovering high-reward samples.

\paragraph{Continuous diffusion.}
In the case of continuous diffusion models, we incorporate the reward gradient into the reverse process by leveraging a first-order Taylor expansion, as previously suggested by ~\citet{dhariwal2021diffusion, kim2025test}. We derive our proposal distribution from the optimal soft policy $\eta^*_k$ by first approximating the soft Q-function with the reward function:
\begin{align}
 \eta^*_{k}(x_{t-1}|x_t)&=\frac 1 Zp_{\theta^k}(x_{t-1}|x_t)\exp\left(\frac{1}{\alpha}Q^*_{\text{soft},\theta^k}(x_t,x_{t-1}) \right)
\\&\approx\frac 1 {Z}p_{\theta^k}(x_{t-1}|x_t)\exp\left(\frac{\gamma^{t-1}}{\alpha}r(\hat x_0(x_{t-1})) \right)
\end{align}
where $\hat x_0(x_{t-1})=\mathbb E_{p_{\theta^k}}[x_0|x_{t-1}]$ and $p_{\theta^k}(x_{t-1}|x_t)=\mathcal N(x_{t-1};\mu_{\theta^k}(x_t,t),\sigma^2_tI)$. To maintain a tractable Gaussian form, we approximate $r(\hat x_0(x_{t-1}))$ with a first-order Taylor expansion around the mean of the prior distribution, $\mu_{\theta^k}$:
\begin{align}
r(\hat x_0(x_{t-1}))\approx r(\hat x_0(x_t))+\nabla_{x_t}r(\hat x_0(x_t))^T(x_{t-1}-\mu_{\theta^k}(x_t,t)).
\end{align}
Since $r(\hat x_0(x_t))$ does not depend on $x_{t-1}$, it is absorbed into $Z$, leaving only the linear term in the exponent. This process effectively incorporates the linearized reward gradient as a guidance term, shifting the mean of the prior distribution~\citep{dhariwal2021diffusion}. We therefore define our proposal policy $\hat\eta_k$ as this modified Gaussian:
\begin{align}
\label{eq: continuous proposal}
\hat \eta_k(x_{t-1}|x_t)&:= \frac{1}{Z}p_{\theta^k}(x_{t-1}\mid x_t)
\exp\left(\frac{\gamma^{t-1}}{\alpha}r(\hat x_0(x_{t-1}))\right)\\
&=\mathcal{N}\left(
x_{t-1};\; 
\mu_{\theta^k}(x_t,t) + \frac{\sigma_t^2}{\alpha}\gamma^{t-1}\nabla_{x_t} r(\hat{x}_0(x_t)),\; 
\sigma_t^2 I
\right).
\end{align}

\paragraph{Discrete diffusion.}
Drawing inspiration from~\citet{grathwohl2021oops} and ~\citet{nisonoff2025unlocking}, we extend gradient-guided E-step to a discrete diffusion model by incorporating Taylor expansion-based gradient exploitation of a differentiable soft optimal Q-function. Similar to the derivation process of proposal distribution of the continuous diffusion, our derivation begins with the approximated optimal policy, where the soft Q-function is approximated as a posterior mean approximation:
\begin{align}
\hat \eta_{k}(x_{t-1}\mid x_t)
&:= \frac{1}{Z}p_{\theta^k}(x_{t-1}\mid x_t)
\exp\left(\frac{\gamma^{t-1}}{\alpha}r(\hat x_0(x_{t-1}))\right)
\end{align}

To create a tractable gradient signal, we apply a first-order Taylor expansion to the reward function $r(\hat x_0(x_{t-1}))$ around the current state estimate $\hat x_0(x_t)$. After substituting the expansion, we can simplify the expression. Terms that are constant with respect to $x_{t-1}$ are absorbed into the normalization constant Z. This isolates the influence of the gradient signal as a linear term that directly modifies the log-probabilities of the prior distribution $p_{\theta^k}$.
\begin{align}
\label{eq: discrete proposal}
\hat \eta_{k}(x_{t-1}\mid x_t)
&\propto p_{\theta^k}(x_{t-1}\mid x_t) \exp\left(\frac{\gamma^{t-1}}{\alpha}\left[r(\hat x_0(x_t)) + \nabla_{x_t} r(\hat x_0(x_t))^\top (x_{t-1}-x_t)\right]\right) \\
&\propto p_{\theta^k}(x_{t-1}\mid x_t) \exp\left(\frac{\gamma^{t-1}}{\alpha}\nabla_{x_t} r(\hat x_0(x_t))^\top x_{t-1}\right)
\end{align}
The final expression defines a new categorical distribution whose log-probabilities are those of the prior policy, shifted by a linear guidance term. Finally, the proposal distribution is Categorical with logit-additive guidance:
\begin{align}
    \hat{\eta}_k\left(x_{t-1} \mid x_t\right)=\prod_{\ell=1}^L \operatorname{Cat}\left(x_{t-1}^{\ell} ; \operatorname{softmax}\left(\mathbf{z}_{\ell}\right)\right)
\end{align}
where the logits vector $\mathbf{z}_{\ell}\in \mathbb{R}^K$ for position $\ell$ has components:
\begin{align*}
    \left[\mathbf{z}_{\ell}\right]_i=\log \pi_{\theta^k}\left(x_{t-1, \ell}=i \mid x_t\right)+\frac{\gamma^t}{\alpha}\left[\nabla_{x_{t, \ell}} r\left(\hat{x}_0\left(x_t\right)\right)\right]_i \quad ,
\end{align*}
and $\ell\in\{1,\cdots,L\}$: dimension, $i\in\{1,\cdots,K\}$: category index.

\subsubsection{Importance sampling correction} 

Since the proposal distribution $\hat \eta_k$ is constructed via Taylor expansion of the reward function, the resulting samples may deviate from the true optimal distribution $q^*_k$. To correct for this mismatch, we employ importance sampling. Given $M$ samples $\{x_{t-1}^{m}\}_{m=1}^M$ from the proposal distribution $\hat \eta_k$, we assign importance weight to each particle as:
\begin{align}
\label{eq: importance sampling}
w_{t-1}^{m}&:=\frac{\eta^*(x_{t-1}^{m}|x_t)}{\hat \eta(x_{t-1}^{m}|x_t)}\\
&=\frac{p_{\theta^k}(x_{t-1}^{m}|x_t)}{\hat \eta(x_{t-1}^{m}|x_t)}\exp (\frac 1\alpha{Q^*_{\text{soft},\theta^k}(x_t, x_{t-1}^{m})})\\
&\approx \frac{p_{\theta^k}(x_{t-1}^{m}|x_t)}{\hat \eta(x_{t-1}^{m}|x_t)}\exp (\frac{\gamma^{t-1}}{\alpha}r(\hat x_0(x_{t-1}^{m})))
\end{align}
A corrected sample from $q^*_k$ is then obtained by resampling according to the normalized weights:
\begin{align}
x_{t-1}&\sim \text{Cat}\left(\left\{\frac{w^{m}_{t-1}}{\sum_{n=1}^M w^{n}_{t-1}}\right\}^M_{m=1}\right)
\end{align}

\clearpage

\section{Extended Related Works}
\label{app: extended related works}

\subsection{Self-training}

The self-training paradigm, pioneered by the AlphaGo series~\citep{silver2016mastering, silver2017masteringb, silver2017masteringa}, alternates between an expert rollout stage to generate high-quality data and a distillation stage to fine-tune the model on that data via maximum likelihood estimation (MLE). This framework has been successfully adapted to Large Language Models in methods like STaR~\citep{zelikman2024star}, which bootstraps reasoning from self-generated rationales, and ReST~\citep{gulcehre2023reinforced, singh2024beyond, zhang2024rest}, which distills a policy from a filtered set of high-quality outputs. In the same vein, DAV translates the self-training paradigm to diffusion models. Within the DAV framework, the test-time search (E-step) acts as the expert rollout to discover diverse, high-reward samples, which are then used for distillation (M-step) via MLE. To our knowledge, DAV is the first to apply the self-training framework to the problem of aligning diffusion models.

\subsection{{GRPO and DPO based generative model alignment}}
Recently, Group Relative Policy Optimization (GRPO) has emerged as a strong method for aligning LLMs, often outperforming PPO-based approaches \citep{shao2024deepseekmath}. Following this trend, several GRPO-based alignment methods for diffusion fine-tuning have been introduced, achieving strong performance in reward optimization \citep{liu2025flow, xue2025dancegrpo, xue2025advantage}.  However, their primary objective is reward maximization rather than preventing over-optimization. DAV is designed to mitigate over-optimization, providing a balanced trade-off between reward optimization, diversity, and naturalness.

There is also a line of work aligning diffusion models using Direct Policy Optimization \citep[DPO;][]{rafailov2023direct}. While these methods achieve strong human preference win rates over pretrained models \citep{wallace2024diffusion, li2024aligning, zhu2025dspo}, they are fundamentally constrained by the quality of pre-collected preference datasets, which often limits their performance frontier \citep{xue2025dancegrpo}. Moreover, prior GRPO and DPO-based approaches focus primarily on visual generative tasks, whereas DAV provides a unified framework that is empirically validated on both continuous and discrete diffusion models.

\clearpage
\section{Additional Backgrounds}
\label{app: Additional Backgrounds}
\subsection{Continuous diffusion}
Diffusion models~\citep{sohl2015deep, ho2020denoising} are a class of hierarchical generative models that learn to approximate the data distribution. A denoising diffusion model generates a sample $x_0$ by a Markov generative process—often referred to as the reverse process—that starts from a standard Gaussian prior $p_T(x_T) = \mathcal N (0, \mathbf I)$:
\begin{align*}
p_\theta(x_{0:T})=p_T(x_T)\prod_{t=1}^T p_{\theta}(x_{t-1}\mid x_t), \quad p_{\theta}(x_{t-1}\mid x_t)=\mathcal N\!\bigl(x_{t-1};\,\mu_\theta(x_t,t),\,\sigma_t^2 I\bigr).
\end{align*}
The forward process is defined by a fixed Markov chain that gradually corrupts the data $x_0$ with Gaussian noise according to a variance schedule $\{\beta_t\}_{t=1}^T$:
\begin{align*}
q(x_{1:T}\mid x_0)=\prod_{t=1}^T q(x_t\mid x_{t-1}),
\qquad
q(x_t\mid x_{t-1})=\mathcal N\!\bigl(x_t;\sqrt{1-\beta_t}\,x_{t-1},\,\beta_t I\bigr).
\end{align*}
A useful property of the forward process is that $x_t$ can be obtained directly in closed form without simulating the entire chain:
\begin{align*}
q(x_t\mid x_0)=\mathcal N\!\bigl(x_t;\sqrt{\bar\alpha_t}\,x_0,\,(1-\bar\alpha_t)I\bigr),
\end{align*}
where $\alpha_t=1-\beta_t$, $\bar\alpha_t=\prod_{s=1}^t\alpha_s$.

Training a diffusion model is performed by optimizing the evidence lower bound(ELBO) on the data log-likelihood $\mathbb{E}_{x_0 \sim q_{\text{data}}}\left[\log p_\theta(x_0)\right]$, which can be decomposed as follows:
\begin{align*}
{\mathbb{E}_q[\log p_\theta(x_0|x_1)]} - \sum_{t=2}^T {\mathbb{E}_q[ D_{\text{KL}}(q(x_{t-1}|x_t,x_0) \parallel p_\theta(x_{t-1}|x_t))]} -{D_{\text{KL}}(q(x_T|x_0) \parallel p_T(x_T))}. 
\end{align*}
\subsection{Discrete diffusion}

Our discrete diffusion model follows the framework defined in Masked Diffusion Language Models (MDLM) \citep{sahoo2024simple}.
\paragraph{Notation.}
We begin by considering a single discrete variable $x_0 \in \{1,2,...,K\}$ that belongs to a finite vocabulary of size $K$. We denote the mask token as $[M]$, which serves as an absorbing state in our forward diffusion process. The diffusion process operates over $T$ discrete time steps, with
$t \in \{1,2,...,T\}$. We use $\mathbf{Q}_t \in[0,1]^{K \times K}$ to denote row-stochastic transition matrices where each row sums to 1. The noise schedule is parameterized by $\alpha_t=e^{-\sigma(t)}$ where $\sigma(t)$ : $[0,1] \rightarrow \mathbb{R}_{+}$, and $\beta_t=1-\alpha_t$ controls the corruption rate.

\paragraph{Forward diffusion process.}
The forward diffusion process 
gradually corrupts the clean data ${x_0}$ through a markov chain over $T$ discrete time steps: 
\begin{align*}
    q(x_{1:T} \mid {x_0}) = \prod_{t=1}^T q\left(x_t \mid x_{t-1}\right).
\end{align*}
Each forward step follows a categorical distribution:
\begin{align*}
    q\left(x_t \mid x_{t-1}\right)=\operatorname{Cat}\left(x_t ;\mathbf{e}_{x_{t-1}} \mathbf{Q}_t\right),
\end{align*}
where $\mathbf{e}_{x_{t-1}}$ is the one-hot encoding of $x_{t-1}$.
The choice of $\mathbf{Q}_t$ determines the corruption process. \citet{austin2021structured} deals with various options for $\mathbf{Q}_t$ such as uniform, absorbing, and Gaussian; however, we are mainly considering an absorbing kernel as in \citet{sahoo2024simple}: 
\begin{align*}
    \left(\mathbf{Q}_t\right)_{i,j}=\begin{cases}
        \beta_t, & \text{if } j = [M] \text{ and } i \neq [M]\\
        1, & \text{if } i = j = [M]\\
        1-\beta_t, & \text{if } j = i \neq [M]\\
        0. & \text{otherwise}
    \end{cases}
\end{align*}
Following prior works,
log-linear schedule $\sigma(t)=-\log (1-t)$
gives us $\alpha_t = 1-t$ and $\beta_t=t$, meaning the corruption probability increases linearly with time.\\
Additionally, marginal forward distribution can be computed as:
\begin{align*}
    q\left(x_t \mid x_0\right)=\operatorname{Cat}\left(x_t ; \mathbf{e}_{x_0} \overline{\mathbf{Q}}_t\right),
\end{align*}
where $\overline{\mathbf{Q}}_t=\mathbf{Q}_1 \mathbf{Q}_2 \cdots \mathbf{Q}_t$ is the cumulative transition matrix. This analytical tractability enables sample-efficient training as we can directly sample ${x}_t$ from ${x}_0$ without simulating the entire forward chain. Let $\bar{\alpha}_t=\prod_{s=1}^t \alpha_s=\prod_{s=1}^t\left(1-\beta_s\right)$. Then:
\begin{align*}
    (\overline{\mathbf{Q}}_t)_{i,j} = \begin{cases}
1 - \overline{\alpha}_t, & \text{if } j = [M] \text{ and } i \neq [M] \\
1, & \text{if } j = i = [M] \\
\overline{\alpha}_t, & \text{if } j = i \neq [M] \\
0. & \text{otherwise}
\end{cases}
\end{align*}
This means the marginal distribution simplifies to:
\begin{align*}
    q(x_t | x_0) = \begin{cases}
\overline{\alpha}_t, & \text{if } x_t = x_0 \neq [M] \\
1 - \overline{\alpha}_t, & \text{if } x_t = [M] \text{ and } x_0 \neq [M] \\
1, & \text{if } x_0 = [M] \\
0. & \text{otherwise}
\end{cases}
\end{align*}    
In other words, each unmasked token $x_0$ remains unchanged with probability $\overline{\alpha_t}$. This simple binary choice makes sampling and likelihood computation extremely efficient.

\paragraph{Reverse diffusion process.}
\citet{austin2021structured} introduces the $x_0$-parameterization for the reverse process. \citet{sahoo2024simple} further simplifies the process by introducing substitution-based (SUBS) parameterization. SUBS-parameterization simplifies the reverse diffusion process by explicitly preventing the model from predicting the absorbing state ($[M]$) when conditioned on an intermediate sample, and also by ensuring that once a token is denoised, it is permanently fixed and cannot revert to a masked state in subsequent sampling steps. The reverse process in this paper is parameterized as:
\begin{align*}
    p_\theta(x_{s} | x_t)= q(x_s | x_t,x_0 = \hat{x_0}(x_t,t;\theta))= \begin{cases}
\text{Cat}(x_{s};x_t), & \text{if } x_t \neq [M] \\
\text{Cat}({x}_s ; \frac{\left(1-\alpha_s\right) \mathbf{m}+\left(\alpha_s-\alpha_t\right) \hat{x}_0\left(x_t, t;\theta\right)}{1-\alpha_t} & \text{if } x_t = [M]
\end{cases}
\end{align*}
where the $p_\theta(x_s | x_t)$ for $s < t$ denoises from time $t$ back to time $s$, and $\mathbf{m}$ denotes the one-hot encoding of the mask token $[M]$.

\paragraph{Loss function.}
From the original loss in \citet{ho2020denoising}, \citet{sahoo2024simple} derives a simplified training objective through Rao-Blackwellization of the discrete-time ELBO. We define a sequence of time step pairs $(s(1), t(1)), ..., (s(T), t(T))$ where $0 \leq s(i) < t(i) \leq T$ for all $i$, which partitions the diffusion trajectory into segments. The key result is:
\begin{align*}
    \mathcal{L}_{\text {MDLM}}&=\sum_{i=1}^T \mathbb{E}_q\left[\mathrm{D}_{\mathrm{KL}}\left(q\left(x_{s(i)} \mid x_{t(i)}, x_0\right) \| p_\theta\left(x_{s(i)} \mid x_{t(i)}\right)\right)\right]\\
    &=\sum_{i=1}^T \mathbb{E}_q\left[\frac{\alpha_{t(i)}-\alpha_{s(i)}}{1-\alpha_{t(i)}} \log \left\langle \hat{x_0}\left(x_{t(i)};\theta\right), x_0\right\rangle\right]
\end{align*}

\paragraph{Multivariate discrete variables.}
Now we extend to multivariate discrete data, where we have a sequence of discrete variables. We redefine our notation for this multivariate case $\mathbf{x}_t=\left({x}_t^1, {x}_t^2, \ldots, {x}_t^L\right)$ where each ${x}_t^{\ell} \in\{1,2, \ldots, K\}$, $L$ is the sequence length, and each position $\ell$ represents a discrete variable in the sequence. The forward process assumes independence across positions:
\begin{align*}
    q\left({x}_t \mid {x}_{t-1}\right)= \prod_{\ell=1}^L q\left({x}_t^{\ell} \mid {x}_{t-1}^{\ell}\right).
\end{align*}
Each component follows the same absorbing state transition as in the single-variable case:  $q\left({x}_t^{\ell} \mid {x}_{t-1}^{\ell}\right)=\text{Cat}\left({x}_t^{\ell}; \right. \mathbf{e}_{{x}_{t-1}^{\ell}} \mathbf{Q}_t)$. The reverse process allows for dependencies across positions:
\begin{align*}
    p_\theta\left({x}_{s} \mid {x}_t\right)= \prod_{\ell=1}^L p_\theta\left({x}_{s}^{\ell} \mid {x}_t^{1: L}\right), \quad s < t.
\end{align*}

Each position depends on the entire sequence context: $p_\theta\left({x}_{s}^{\ell} \mid {x}_t^{1: L}\right)=\text{Cat}\left({x}_{s}^{\ell} ;\operatorname{softmax}\left(f_\theta\left({x}_t^{1: L}, t\right)\right)\right)$. Thereby, the training objective extends the single-variable SUBS formulation:
\begin{align*}
    \mathcal{L}_{\text {MDLM }}^{\text {multi }}=
    \sum_{i=1}^T \mathbb{E}_q\left[\sum_{\ell=1}^L \frac{\alpha_{t(i)}-\alpha_{s(i)}}{1-\alpha_{t(i)}} \mathbb{I}\left[{x}_{t(i)}^{\ell}=[M]\right] \log p_\theta\left({x}_0^{\ell} \mid {x}_{t(i)}^{1: L}\right)\right],
\end{align*}
where $\mathbb{I}$ is an indicator function, and it is essential to ensure that we don't compute the loss based on the unmasked tokens.

\clearpage
\section{Experimental Details}
\label{app: experimental details}
\subsection{Text-to-Image Diffusion Models}
\label{app: experimental details - continuous}
\textbf{Diffusion model.}
Based on Stable Diffusion v1.5~\citep{rombach2022high}, we adopt 50-step DDPM sampling~\citep{ho2020denoising} with classifier-free guidance~\citep{ho2021classifierfree}, setting the guidance weight to 5.0 for all experiments. For parameter-efficient fine-tuning, we apply LoRA~\citep{hu2022lora} to the denoising UNet with a rank of 4, consistently across all baselines.

\paragraph{Details of evaluation.} To assess prompt–image alignment, we use CLIPScore~\citep{radford2021learning} and ImageReward~\citep{xu2023imagereward}. To evaluate diversity, we employ two LPIPS-based metrics~\citep{zhang2018unreasonable}, which quantify perceptual differences between images. LPIPS-A measures diversity across all generated images, irrespective of the prompt, while LPIPS-P measures diversity within each prompt by computing the mean LPIPS distance among images conditioned on the same prompt. For all evaluations, we sample 32 images per prompt.

\paragraph{Baselines.}
We reproduce results for DDPO\footnote{\url{https://github.com/kvablack/ddpo-pytorch}}~\citep{black2023training} and TDPO\footnote{\url{https://github.com/ZiyiZhang27/tdpo}}~\citep{zhang2024confronting} using their official codebases. As the source code for DRaFT is not publicly available, we use a faithful implementation based on the AlignProp codebase\footnote{\url{https://github.com/mihirp1998/AlignProp}}~\citep{prabhudesai2023aligning}. We set the batch size to 64 for all fine-tuning methods. For DAS\footnote{\url{https://github.com/krafton-ai/DAS}}~\citep{kim2025test}, we adjusted its sampling steps from 100 to 50 to match other methods. Accordingly, we set its tempering parameter $\gamma$ to 0.014 to ensure $(1+\gamma)^{T}-1=1$ for $T=50$, following the hyperparameter setting guidelines of the authors. 

\paragraph{Training details of baselines.}
Most baselines are trained for 100 epochs. However, due to its slower optimization, we trained DDPO for 500 epochs. For all methods, we report the best performance prior to collapse. In \Cref{table:comparision_with_tti}, the reported results correspond to epoch 400 for DDPO, epoch 40 for DRaFT, and epoch 100 for TDPO. In the case of DAS, we directly report the performance obtained from its test-time inference without additional training.

\paragraph{Aesthetic score optimization.} We employ the AdamW optimizer~\citep{loshchilov2018decoupled} with a learning rate of 1e-3, $\beta_1=0.9$, and $\beta_2=0.999$. We set the training batch size to 64. The core hyperparameters for DAV are set as follows: $\alpha=0.005$, $\gamma=0.9$, and an importance sampling particle size $M=4$. For the DAV-KL variant, we use a KL-regularization coefficient of $\lambda=0.01$. In each iteration, the M-step performs a single update using the dataset collected from the corresponding E-step. We train the model for 100 epochs, which takes approximately 14 hours on eight RTX 3090 (24GB) GPUs. A sensitivity analysis for these hyperparameters is presented in \Cref{app: sensitivity test}.

\paragraph{Compressibility and incompressibility optimization.} We use the same hyperparameter configuration as the aesthetic score optimization experiment, with the only exception being the importance sampling particle size, which we set to $M=16$. Training for six epochs takes approximately 2 hours on eight RTX 3090 (24GB) GPUs.

\clearpage
\subsection{DNA Sequence Design}
\paragraph{Diffusion Model.} We used the pretrained Masked Diffusion Language Model (MDLM)~\citep{sahoo2024simple}, following the pretrained diffusion from the~\citet{li2024derivative}. This model is trained on the Enhancer dataset~\citep{gosai2023machine}, which consists of DNA sequences of length 200. 

\paragraph{Reward Oracle.} Our reward modeling and evaluation setup follows that of DRAKES~\citep{wang2025finetuning}. To prevent data leakage, we use two separate reward oracles trained on distinct data splits from~\cite{lal2024designing}. Both oracles are Enformer models~\citep{avsec2021effective} initialized with pretrained weights; one serves as the reward signal during fine-tuning, while the other is reserved for held-out evaluation. Further details on the evaluation protocol can be found in~\citet{wang2025finetuning}, Appendix F.2.

\paragraph{Evaluation Metrics.} We assess the generated sequences on three criteria: biological validity, naturalness, and diversity. We generate sequences for evaluation using a batch size of 640. To measure biological validity and detect over-optimization, we use an independently trained classifier to predict Chromatin Accessibility (ATAC-Acc)~\citep{lal2024designing,wang2025finetuning,su2025iterative}. We assess sequence naturalness using the 3-mer Pearson Correlation (3-mer Corr), which compares the k-mer frequencies of generated sequences to those of the top 0.1\% most active enhancers in the dataset~\citep{gosai2023machine}. Finally, we quantify diversity using the average Levenshtein distance between generated sequences~\citep{haldar2011levenshtein,kim2023bootstrapped}.

\paragraph{Predicted activity optimization.} We use the AdamW optimizer~\citep{loshchilov2018decoupled} with a learning rate of 1e-3, $\beta_1=0.9$, and $\beta_2=0.999$. Hyperparameters for our method are set to $\alpha=0.01$, $\gamma=1$, and an importance sampling particle size of $M=10$. In the M-step, posterior mean approximation at early timesteps is inherently unreliable. Consequently, directly maximizing the likelihood in \Cref{eq: DAV loss function} may compel the model to replicate these erroneous estimates, resulting in unstable optimization. To mitigate this issue, we omit the first 80 steps from the original 128 steps and perform maximum likelihood estimation only on the remaining 48 steps. We train DAV for 200 epochs, which takes approximately 15 hours on a single RTX 3090 (24GB) GPU.

\paragraph{Experiment Result}
\Cref{table:main_dna} shows that DAV and its posterior variant, DAV Posterior, achieve a superior balance between reward optimization and validity, and naturalness compared to baseline methods. While strong RL-based methods, such as DDPO and VIDD attain high target rewards, they suffer from a significant drop in diversity and ATAC-acc, which indicates reward over-optimization. In contrast, our amortized DAV policy achieves a higher reward (7.71) than baselines while maintaining high diversity (87.91) and naturalness. Furthermore, the DAV Posterior achieves the highest scores in both the target reward (9.04) and validity (0.865), while still maintaining high diversity. This demonstrates the ability of our methods to generate high-reward sequences without sacrificing naturalness or diversity. {For the test-time search baselines \citep{li2024derivative, chu2025split}, although SGDD approaches DAV in Pred-activity and even achieves higher diversity, DAV still substantially outperforms it in ATAC-acc and 3-mer correlation. This demonstrates that DAV Posterior is significantly more robust to reward over-optimization compared to these baselines.}
\begin{table}[h]
\centering
\caption{Comparison of sequence design methods.}
\resizebox{0.7\linewidth}{!}{
\renewcommand{\arraystretch}{1.2}
\begin{tabular}{l|cccc}
\toprule
\makecell{Method} & \makecell{Pred-activity ($\uparrow$)\\ (target)}
& \makecell{ATAC-acc ($\uparrow$)\\ (Validity)}
& \makecell{3-mer Corr ($\uparrow$)\\ (naturalness)}
& \makecell{Levenstein \\ Diversity ($\uparrow$)} \\
\midrule
Pre-trained      & 0.13 (0.03) & 0.018 (0.012)     & 0.000 (0.081) & \textbf{111.58} (0.27) \\
DRAKES           & 6.05 (1.09) & 0.784 (0.354)     & 0.229 (0.221) & 90.38 (16.20) \\
VIDD             & 7.31 (0.09) & 0.545 (0.425)     & 0.261 (0.074) & 64.31 (10.06) \\
DDPO             & 7.51 (0.05) & 0.129 (0.266)     & 0.287 (0.114) & 49.12 (13.83) \\
\rowcolor{blue!5} DAV & 7.71 (0.26) & 0.552 (0.109) & \textbf{0.397} (0.145) & 87.91 (3.73) \\
\midrule
SVDD                  & 5.06 (0.03) & 0.244 (0.008) & \textbf{0.675} (0.004) & 64.72 (0.42) \\
SGDD ($\beta=30$)     & 8.66 (0.04) & 0.225 (0.008) & 0.090 (0.020)          & 110.14 (0.07) \\
SGDD ($\beta=50$)     & 8.77 (0.07) & 0.223 (0.021) & 0.090 (0.019)          & 109.94 (0.11) \\
\rowcolor{blue!5} DAV Posterior 
                      & \textbf{9.24 (0.23)} & \textbf{0.920 (0.067)} & 0.347 (0.160) & 87.13 (4.63) \\
\bottomrule
\end{tabular}
}
\label{table:main_dna}
\end{table}

\clearpage

\section{Sensitivity Test}
\label{app: sensitivity test}
This section provides a sensitivity analysis for the hyperparameters governing the test-time search in DAV. We examine four key parameters: (1) $\alpha$, a temperature parameter that controls the exploration strength, where lower values lead to sharper sampling by amplifying the influence of the soft Q-function; (2) $\gamma$, the discount factor used for credit assignment; (3) \emph{Distillation Steps}, the number of training cycles applied to the dataset collected in a single E-step; and (4) \emph{Number of particles} used for the importance sampling step in \Cref{eq: importance sampling}. The experimental setting, except for these four hyperparameters, follows the setting in \Cref{app: experimental details - continuous}.

\paragraph{Temperature $\alpha$.} 
As shown in \Cref{figure:sensitivity_alpha}, lower $\alpha$ yields higher reward scores. However, excessively small values may cause over-optimization, degrading ImageReward, CLIP score, and diversity. Increasing $\alpha$ mitigates this effect, producing more diverse samples.

\paragraph{Discount factor $\gamma$.} 
\Cref{figure:sensitivity_gamma} shows that setting $\gamma^T \approx 0$ stabilizes optimization. In contrast, high values such as $\gamma=0.95$ fail to suppress early-step credit assignment, leading to over-optimization. Interestingly, before collapse, reward and alignment scores are positively correlated with $\gamma$, while diversity metrics exhibit a negative correlation.  

\paragraph{Distillation steps.} 
As illustrated in \Cref{figure:sensitivity_improve}, increasing the number of distillation steps in the M-step improves the reward when set above one. However, excessive steps reduce both alignment scores and diversity.  

\paragraph{Number of particles.} 
\Cref{figure:sensitivity_duplicate} shows that the number of particles used for importance sampling has a limited impact on overall performance. Yet, large values can harm alignment scores such as ImageReward and CLIP. For computational efficiency, we set the number of particles to four. 

\begin{figure}[H]
\vspace{-0pt}
\begin{center}
\centerline{\includegraphics[width=0.95\textwidth]{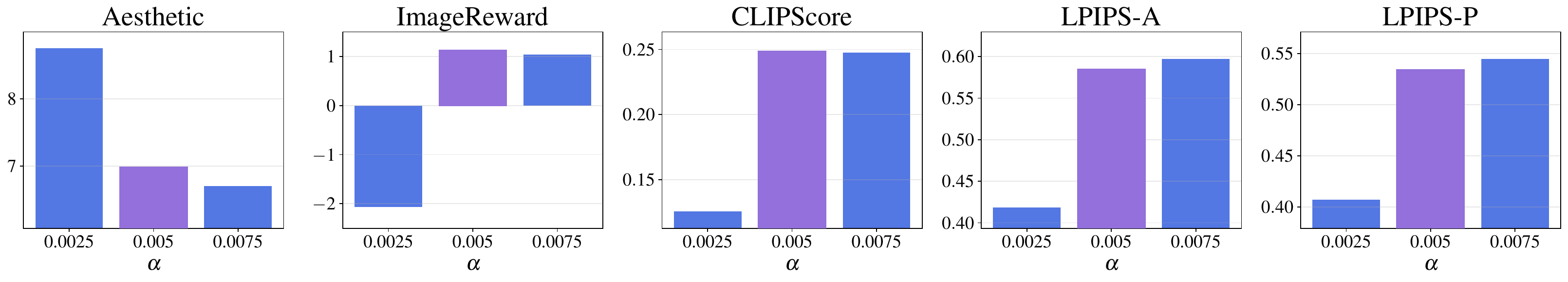}}
\end{center}
\vspace{-10pt}
\caption{Effect of the inverse temperature parameter $\alpha$ on performance metrics.}
\label{figure:sensitivity_alpha}
\vspace{-15pt}
\end{figure}

\begin{figure}[H]
\vspace{-15pt}
\begin{center}
\centerline{\includegraphics[width=0.95\textwidth]{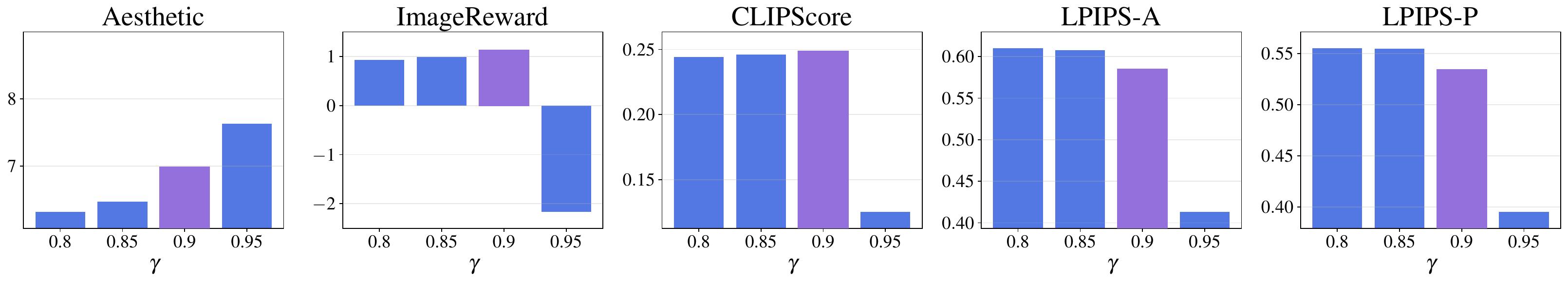}}
\end{center}
\vspace{-10pt}
\caption{Effect of the discount factor $\gamma$ on performance metrics.}
\label{figure:sensitivity_gamma}
\vspace{-15pt}
\end{figure}

\begin{figure}[H]
\vspace{-15pt}
\begin{center}
\centerline{\includegraphics[width=0.95\textwidth]{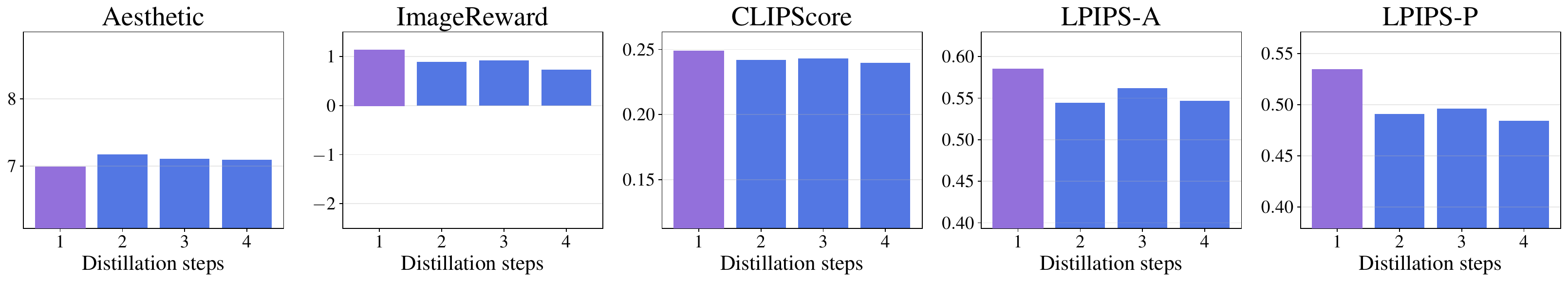}}
\end{center}
\vspace{-10pt}
\caption{Effect of the number of distillation steps on performance metrics.}
\label{figure:sensitivity_improve}
\vspace{-15pt}
\end{figure}

\begin{figure}[H]
\vspace{-15pt}
\begin{center}
\centerline{\includegraphics[width=0.95\textwidth]{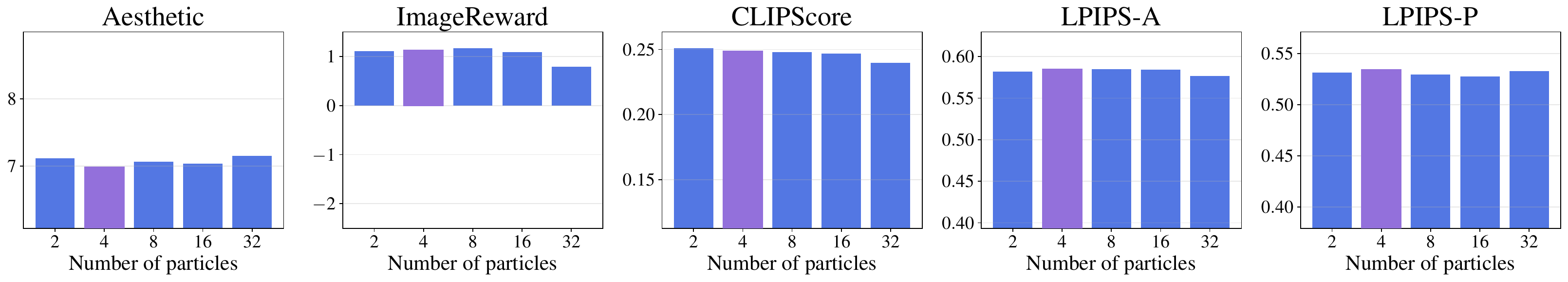}}
\end{center}
\vspace{-10pt}
\caption{Effect of the number of particles on performance metrics.}
\label{figure:sensitivity_duplicate}
\vspace{-15pt}
\end{figure}

\clearpage

\section{Non-differentiable Rewards}
\label{app: qualitative-for-non-differentiable-reward}

\begin{figure}[ht]
\vspace{-0pt}
\begin{center}
\centerline{\includegraphics[width=0.6\textwidth]{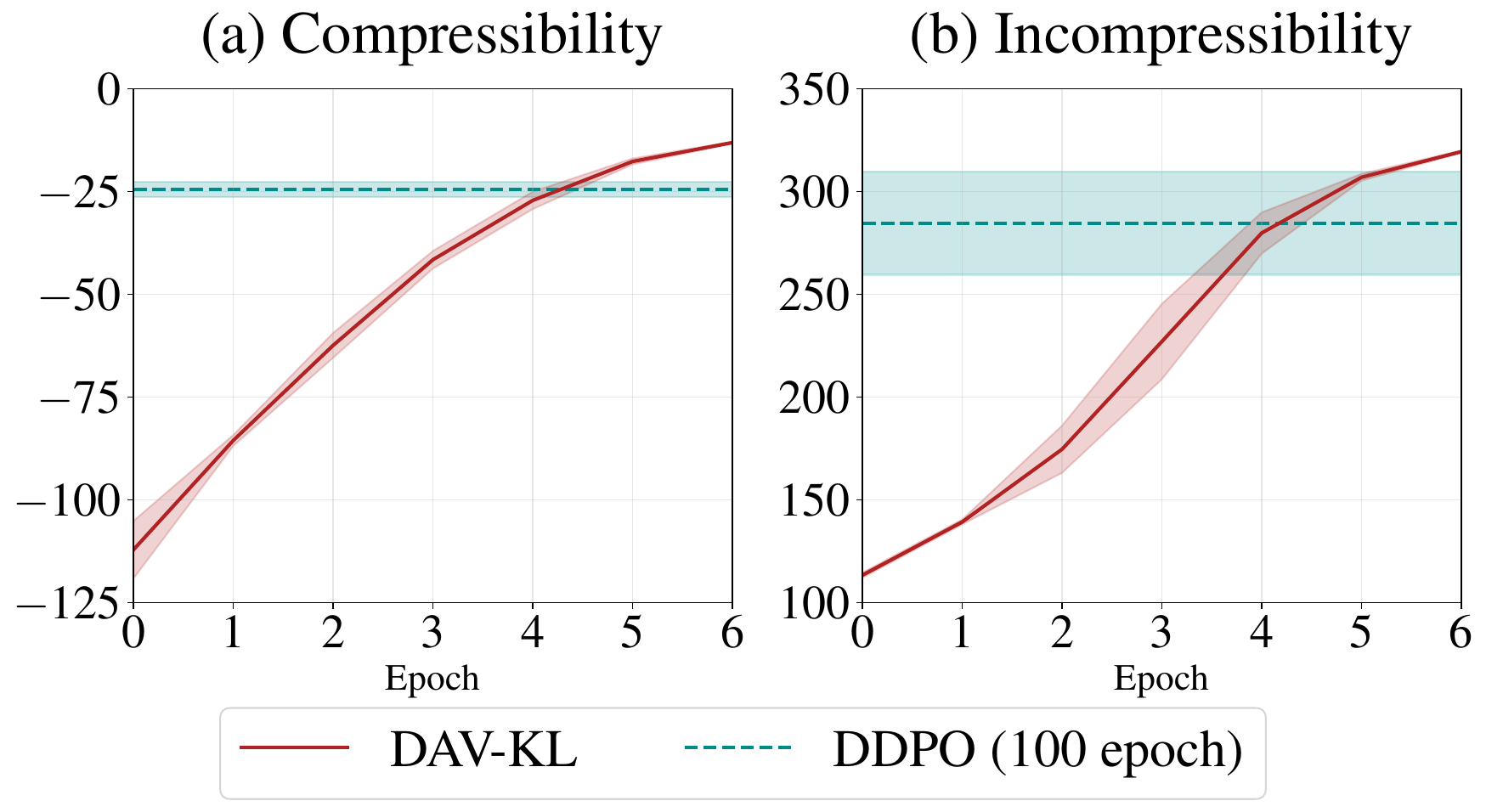}}
\end{center}
\vspace{-0pt}
\caption{Optimization trends of DAV on compressibility and incompressibility rewards.}
\label{figure:comp_incomp_comparison}
\vspace{-10pt}
\end{figure}

\begin{figure}[ht]
\vspace{-0pt}
\begin{center}
\centerline{\includegraphics[width=\textwidth]{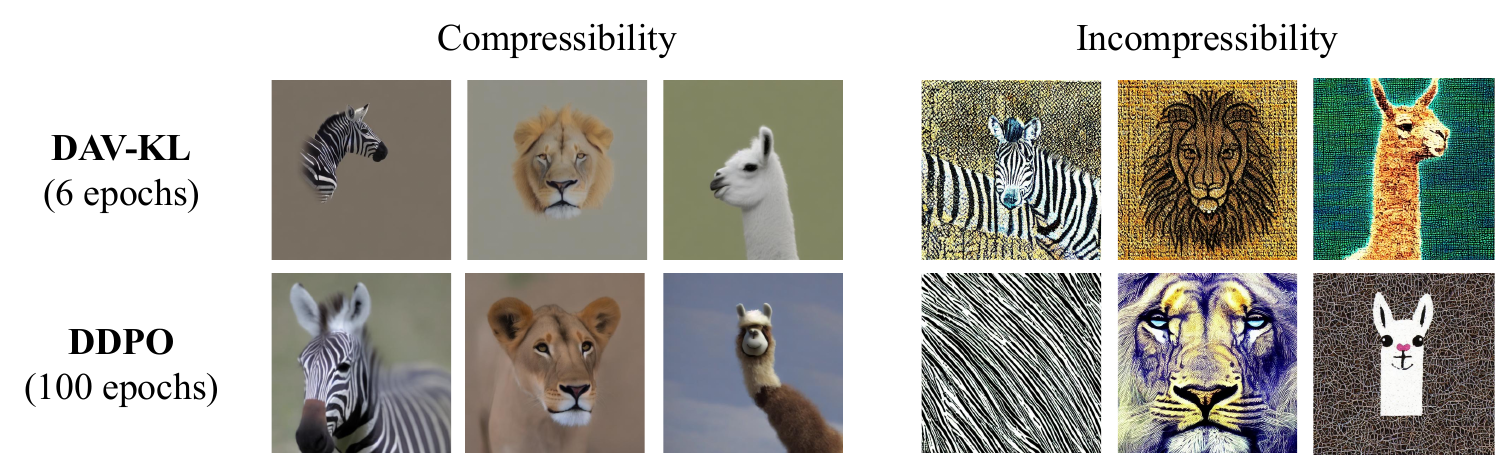}}
\end{center}
\vspace{-10pt}
\caption{Qualitative comparison between DAV-KL and DDPO when optimized for the compressibility and incompressibility reward function.}
\label{figure:qualitative-for-non-differentiable-rewards}
\vspace{-0pt}
\end{figure}

DAV extends naturally to non-differentiable rewards by skipping the gradient-based proposal construction described in \Cref{section: proposal via taylor}. As shown in \Cref{figure:comp_incomp_comparison}, DAV-KL effectively optimizes both compressibility and incompressibility rewards. This result demonstrates the versatility of DAV in black-box reward optimization.

\Cref{figure:qualitative-for-non-differentiable-rewards} presents qualitative comparisons between DAV-KL and DDPO. DAV-KL produces more aligned samples with fewer training epochs. Under the compressibility reward, DAV-KL preserves only the core parts of the animals, whereas DDPO retains redundant background textures. For incompressibility, DAV-KL better captures the essential semantic features of the animals, while DDPO often fails to do so despite DAV-KL achieving higher reward scores. These results confirm that DAV is effective for optimizing the non-differentiable rewards. 

\clearpage
\section{{Computational Costs Analysis }}
\paragraph{Aesthetic score}
\Cref{tab:compute_comparison} reports the RTX 4090 GPU hours and performance of our method compared to DDPO, DRaFT, and their KL-regularized variants. KL-regularized DRAFT and DDPO are optimized following Equation 18 of \cite{uehara2024understanding}:
\[
p^* = 
\argmax_{p_{\theta}} \mathbb {E}_{\tau \sim p_{\theta}(\tau)} \left[
r(x_0)-
\alpha \sum_{t=1}^{T}  \mathcal{D}_{KL}(p_\theta(\cdot \mid x_t) || p_{\theta^0}(\cdot \mid x_t))
\right].
\]

While DAV requires a substantial computational budget, its runtime remains comparable to the high-epoch DDPO and KL-regularized baselines. Crucially, DAV justifies this cost by achieving a superior trade-off: it attains the highest aesthetic scores while preserving LPIPS-A and ImageReward. In contrast, KL-regularized baselines suffer significant degradation in diversity and ImageReward even when consuming comparable or greater GPU hours.

\begin{table}[h]
\centering
\caption{Comparison of computational cost and performance for optimizing aesthetic score.}
\label{tab:compute_comparison}
\resizebox{0.9\linewidth}{!}{
\begin{tabular}{l|c|c|c|c}
\toprule
\textbf{Method-epochs} & \textbf{Aesthetic (↑)} & \textbf{LPIPS-A (↑)} & \textbf{ImageReward (↑)} & \textbf{GPU hours} \\
\midrule
Pretrained & 5.40 & \textbf{0.65} & 0.90 & - \\
DDPO-100 & 6.08 & 0.63 & 0.96 & 18.1 \\
DDPO-200 & 6.44 & 0.57 & 0.85 & 36.1 \\
DDPO-300 & 6.70 & 0.54 & 0.67 & 54.2 \\
DDPO-400 & 6.84 & 0.48 & 0.28 & 72.2 \\
DDPO-500 & 6.82 & 0.44 & -0.44 & 90.3 \\
DDPO+KL-400 ($\alpha$=0.3) & 6.93 & 0.47 & 0.47 & 82.7 \\
DRaFT-42 & 7.22 & 0.46 & 0.19 & \textbf{1.7} \\
DRaFT+KL-2000 ($\alpha$=0.035) & 6.78 & 0.59 & 0.23 & 220.0 \\
DAV-100 (M=4) & \textbf{8.04} & 0.53 & 0.95 & 82.4 \\
DAV-KL-100 (M=2) & 7.11 & 0.58 & 1.11 & 91.2 \\
DAV-KL-100 (M=4) & 6.99 & 0.58 & \textbf{1.13} & 98.7 \\
\bottomrule
\end{tabular}
}
\end{table}

\paragraph{Compressibility and incompressibility}
As shown in \Cref{figure:comp_incomp_comparison}, DAV-KL trained for 6 epochs substantially outperforms the DDPO baseline trained for 100 epochs. In terms of compute, DAV-KL requires 14.3 GPU hours on an RTX 3090, which is roughly half the cost of DDPO at 28.7 GPU hours.

\paragraph{DNA sequence design}
For discrete diffusion model fine-tuning, we reproduce DDPO \citep{black2023training} and VIDD \citep{su2025iterative} using the official codebases of VIDD \footnote{\url{https://github.com/divelab/VIDD}}, and we reproduce DRAKES \citep{wang2025finetuning} following its official implementation \footnote{\url{https://github.com/ChenyuWang-Monica/DRAKES}}. All hyperparameters are set exactly as specified in the original papers. On a single RTX 3090 GPU, the training times are approximately: 14 hours for DDPO, 16 hours for VIDD, 43 hours for DRAKES, and 15 hours for DAV. Notably, DAV achieves comparable training time while yielding higher reward and naturalness with preserved diversity.

\clearpage

\end{document}